\def\P{\mathbb P}
\def\pt{\tilde{p}}
\def\Bt{\tilde{B}}
\def\wt{\tilde{w}}
\DeclareMathOperator*{\argmax}{argmax}
\DeclareMathOperator*{\argmin}{argmin}
\DeclareMathOperator*{\conv}{Conv}
\DeclareMathOperator*{\card}{card}
\DeclareMathOperator*{\Dir}{Dir}
\DeclareMathOperator*{\Mult}{Mult}
\newsavebox\myboxA
\newsavebox\myboxB
\newlength\mylenA
\newcommand*\xoverline[2][0.75]{%
    \sbox{\myboxA}{$\m@th#2$}%
    \setbox\myboxB\null
    \ht\myboxB=\ht\myboxA%
    \dp\myboxB=\dp\myboxA%
    \wd\myboxB=#1\wd\myboxA
    \sbox\myboxB{$\m@th\overline{\copy\myboxB}$}
    \setlength\mylenA{\the\wd\myboxA}
    \addtolength\mylenA{-\the\wd\myboxB}%
    \ifdim\wd\myboxB<\wd\myboxA%
       \rlap{\hskip 0.5\mylenA\usebox\myboxB}{\usebox\myboxA}%
    \else
        \hskip -0.5\mylenA\rlap{\usebox\myboxA}{\hskip 0.5\mylenA\usebox\myboxB}%
    \fi}
\title{Conic Scan-and-Cover algorithms for \\ nonparametric topic modeling}
\author{
Mikhail Yurochkin \\
Department of Statistics \\
University of Michigan \\
\texttt{moonfolk@umich.edu}
   \And
Aritra Guha \\
Department of Statistics \\
University of Michigan \\
\texttt{aritra@umich.edu}
   \And
XuanLong Nguyen \\
Department of Statistics \\
University of Michigan \\
\texttt{xuanlong@umich.edu} \\
}
\begin{document}
\theoremstyle{definition}
\newtheorem{defn}{Definition}
\newtheorem{thm}{Theorem}
\newtheorem{prop}{Proposition}
\newtheorem{cor}{Corollary}
\newtheorem{problem}{Problem}
\newtheorem{lem}{Lemma}
\newtheoremstyle{TheoremNum}
    {\topsep}{\topsep}              
    {}                      
    {}                              
    {\bfseries}                     
    {.}                             
    { }                             
    {\thmname{#1}\thmnote{ \bfseries #3}}
\theoremstyle{TheoremNum}
\newtheorem{lemn}{Lemma}
\newtheorem{thmn}{Theorem}
\newtheorem{propn}{Proposition}

\newcommand{\tikzAngleOfLine}{\tikz@AngleOfLine}
  \def\tikz@AngleOfLine(#1)(#2)#3{%
  \pgfmathanglebetweenpoints{%
    \pgfpointanchor{#1}{center}}{%
    \pgfpointanchor{#2}{center}}
  \pgfmathsetmacro{#3}{\pgfmathresult}%
  }

\maketitle

\begin{abstract}
We propose new algorithms for topic modeling when the number of topics is unknown. 
Our approach relies on an analysis of the concentration of mass and angular geometry of the topic
simplex, a convex polytope constructed by taking the convex hull of vertices representing the latent topics.
Our algorithms are shown in practice to have accuracy comparable to a 
Gibbs sampler in terms of topic estimation, which requires the number of topics be given. 
Moreover, they are one of the fastest among several state of the art parametric techniques.\footnote{Code is available at \url{https://github.com/moonfolk/Geometric-Topic-Modeling}.}
Statistical consistency of our estimator is established under some conditions.
\end{abstract}

\section{Introduction}
\label{intro}
A well-known challenge associated with topic modeling inference can be succinctly
summed up by the statement that sampling based approaches may be accurate but computationally very slow, e.g., \cite{pritchard2000inference,griffiths2004finding}, while the variational inference approaches are
faster but their estimates may be inaccurate, e.g., \cite{blei2003latent,hoffman2013stochastic}. 
For nonparametric topic inference, i.e., when the number of topics is a priori unknown, the problem
becomes more acute. 
The Hierarchical Dirichlet Process model \citep{teh2006hierarchical} is an 
elegant Bayesian nonparametric approach which allows for the number of topics to 
grow with data size, but its sampling based inference is much more inefficient
compared to the parametric counterpart.
As pointed out by \cite{yurochkin2016geometric}, the root of the inefficiency can be traced
to the need for approximating the posterior distributions of the latent variables representing the topic labels --- these are not geometrically intrinsic as any permutation of the labels yields the same likelihood. 

%

A promising approach in addressing the aforementioned challenges is to take a \emph{convex geometric}
perspective, where topic learning and inference may be formulated as a convex geometric 
problem: the observed documents correspond to points randomly drawn from a \emph{topic polytope}, a convex set 
whose vertices represent the topics to be inferred. This perspective has been adopted to establish
posterior contraction behavior of the topic polytope in both theory
and practice \citep{nguyen2015posterior,tang2014understanding}. A method for topic estimation that
exploits convex geometry, the Geometric Dirichlet Means (GDM) algorithm, was proposed by \cite{yurochkin2016geometric}, which demonstrates attractive behaviors both in terms of running 
time and estimation accuracy.
In this paper we shall continue to amplify this viewpoint to address \emph{nonparametric topic modeling},
a setting in which the number of topics is unknown, as is the distribution inside the topic polytope (in 
some situations). 

We will propose algorithms for topic estimation by explicitly accounting for the concentration of mass and angular 
geometry of the topic polytope, typically a simplex in topic modeling applications. 
The geometric intuition is fairly clear: each vertex of the topic 
simplex can be identified by a ray emanating from its center
(to be defined formally), while the concentration of mass can be quantified for the cones hinging
on the apex positioned at the center. Such cones can be rotated around the center to scan for high density regions 
inside the topic simplex --- under mild conditions such cones can be 
constructed efficiently to recover both the number of vertices and their estimates. 

We also mention another fruitful approach, which casts topic estimation as a matrix factorization problem \citep{deerwester1990indexing, xu2003document, anandkumar2012spectral, arora2012practical}.
A notable recent algorithm coming from the matrix factorization perspective is RecoverKL \citep{arora2012practical}, which solves non-negative matrix factorization (NMF) efficiently under assumptions on the existence of
so-called anchor words. RecoverKL remains to be a parametric technique --- we will extend it to a nonparametric setting and show that the 
anchor word assumption appears to limit the number of topics one can efficiently learn.

Our paper is organized as follows. In Section \ref{back} we discuss recent developments in geometric topic modeling and introduce our approach; Sections \ref{cov_th} and \ref{doc_scan} deliver the contributions outlined above; Section \ref{experiment} demonstrates experimental performance; we conclude with a discussion in Section \ref{discussion}.

\section{Geometric topic modeling}
\label{back}
\paragraph{Background and related work}
In this section we present the convex geometry of the Latent Dirichlet Allocation (LDA) model of \cite{blei2003latent}, 
along with related theoretical and algorithmic results that motivate our work.
Let $V$ be vocabulary size and $\Delta^{V-1}$ be the corresponding vocabulary probability simplex. Sample $K$ topics (i.e., distributions on words) $\beta_k\thicksim\Dir_V(\eta)$, $k=1,\ldots,K$, where $\eta\in\mathbb{R}_{+}^V$. Next, sample $M$ document-word probabilities $p_m$ residing in the \emph{topic simplex} $B:=\conv(\beta_1,\ldots,\beta_K)$ (cf. \cite{nguyen2015posterior}), by first generating their \emph{barycentric coordinates} (i.e., topic proportions) $\theta_m\thicksim\Dir_K(\alpha)$ and then setting $p_m := \sum_k \beta_k \theta_{mk}$ for $m=1,\ldots,M$ and $\alpha \in \mathbb{R}_{+}^K$. Finally, word counts of the $m$-th document can be sampled $w_m\thicksim\Mult(p_m,N_m)$, where $N_m \in \mathbb{N}$ is the number of words in document $m$. The above model is equivalent to the LDA when individual words to topic label assignments are marginalized out. 

\cite{nguyen2015posterior} established posterior contraction rates of the topic simplex, provided that $\alpha_k\leq 1\,\forall k$ and \emph{either} number of topics $K$ is known \emph{or} topics are sufficiently separated in terms of the Euclidean distance. \cite{yurochkin2016geometric} devised an estimate for $B$, taken to be a fixed unknown quantity, by formulating a geometric objective function,
which is minimized when topic simplex $B$ is close to the normalized documents $\bar w_m := w_m/N_m$. They showed that the estimation of topic proportions $\theta_m$ given $B$ simply reduces to taking barycentric coordinates of the projection of $\bar w_m$ onto $B$. 
To estimate $B$ given $K$, they proposed a Geometric Dirichlet Means (GDM) algorithm, which operated
by performing a k-means clustering on the normalized documents, followed by a geometric correction
for the cluster centroids. The resulting algorithm is remarkably fast and accurate, supporting the potential of the geometric approach. The GDM is not applicable when $K$ is unknown, 
but it provides a motivation which our approach is built on.

\paragraph{The Conic Scan-and-Cover approach}
To enable the inference of $B$ when $K$ is not known, we need to investigate
the concentration of mass inside the topic simplex. It suffices to focus on two types of
geometric objects: cones and spheres, which provide the basis for a complete coverage of the simplex.
To gain intuition of our procedure, which we call Conic Scan-and-Cover (CoSAC) approach,
imagine someone standing at a center point of a triangular dark room 
trying to figure out all corners with a portable flashlight, which can produce a \emph{cone} of light.
A  room corner can be identified with the direction of the farthest visible data objects. Once a corner is found, 
one can turn the flashlight to another direction to scan for the next ones. See Fig. \ref{fig:p_remain}, 
where red denotes the scanned area. To make sure that all corners 
are detected, the cones of light have to be open to an appropriate range of angles so that enough data objects
can be captured and removed from the room. To make sure no false corners are declared,
we also need a suitable stopping criterion, by relying only on data points that lie beyond a certain spherical
radius, see Fig. \ref{fig:add_r}. Hence, we need to be able to gauge the concentration of mass for 
suitable cones and spherical balls in $\Delta^{V-1}$.  This is the subject of the next section.

\begin{figure*}{t!}
\vskip -0.1in
\begin{subfigure}{.33\textwidth}
  \centering
  \captionsetup{justification=centering}
  \includegraphics[width=\linewidth]{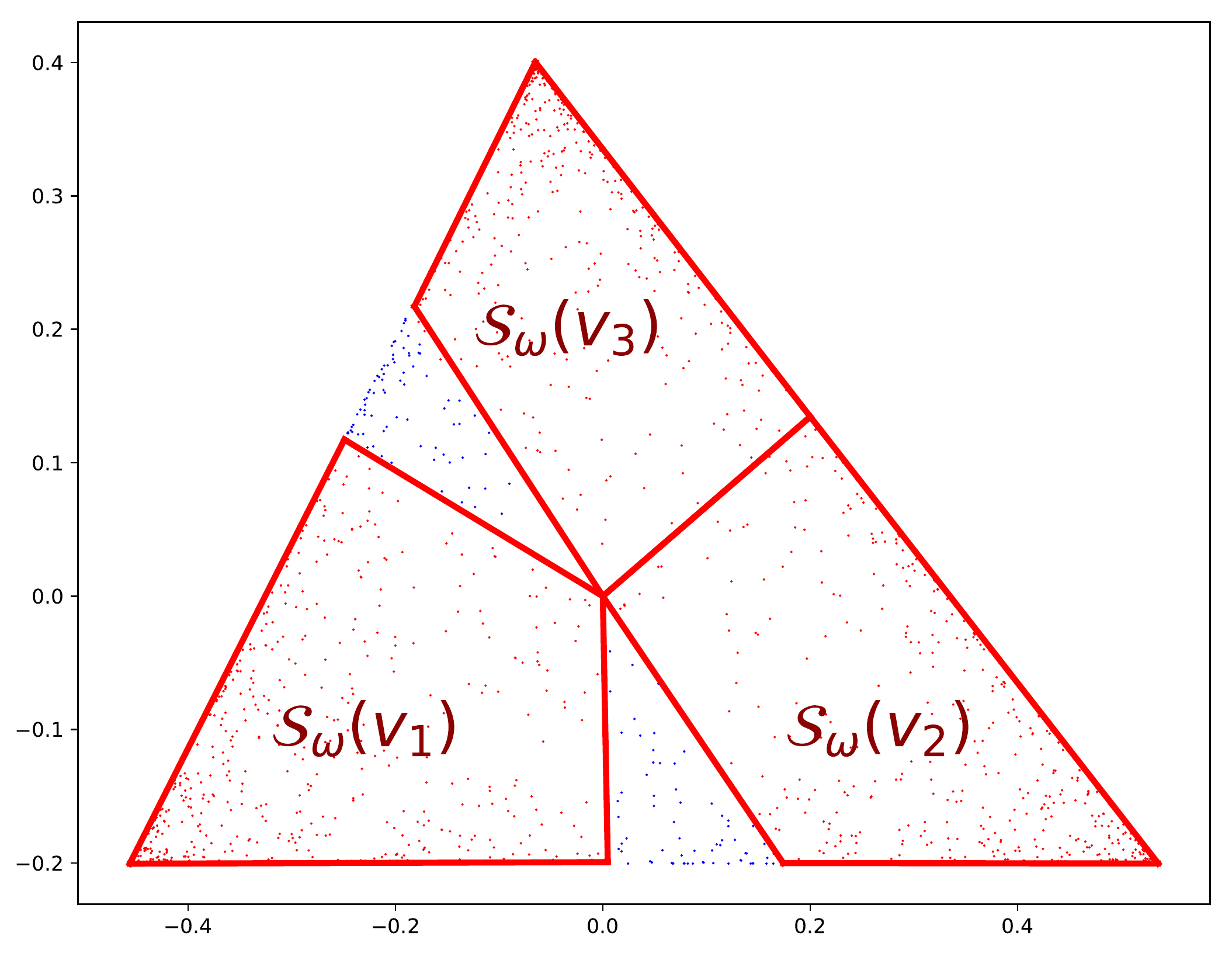}
  \caption{An incomplete coverage using \\ 3 cones (containing red points).}
  \label{fig:p_remain}
\end{subfigure}
\begin{subfigure}{.33\textwidth}
  \centering
  \captionsetup{justification=centering}
  \includegraphics[width=\linewidth]{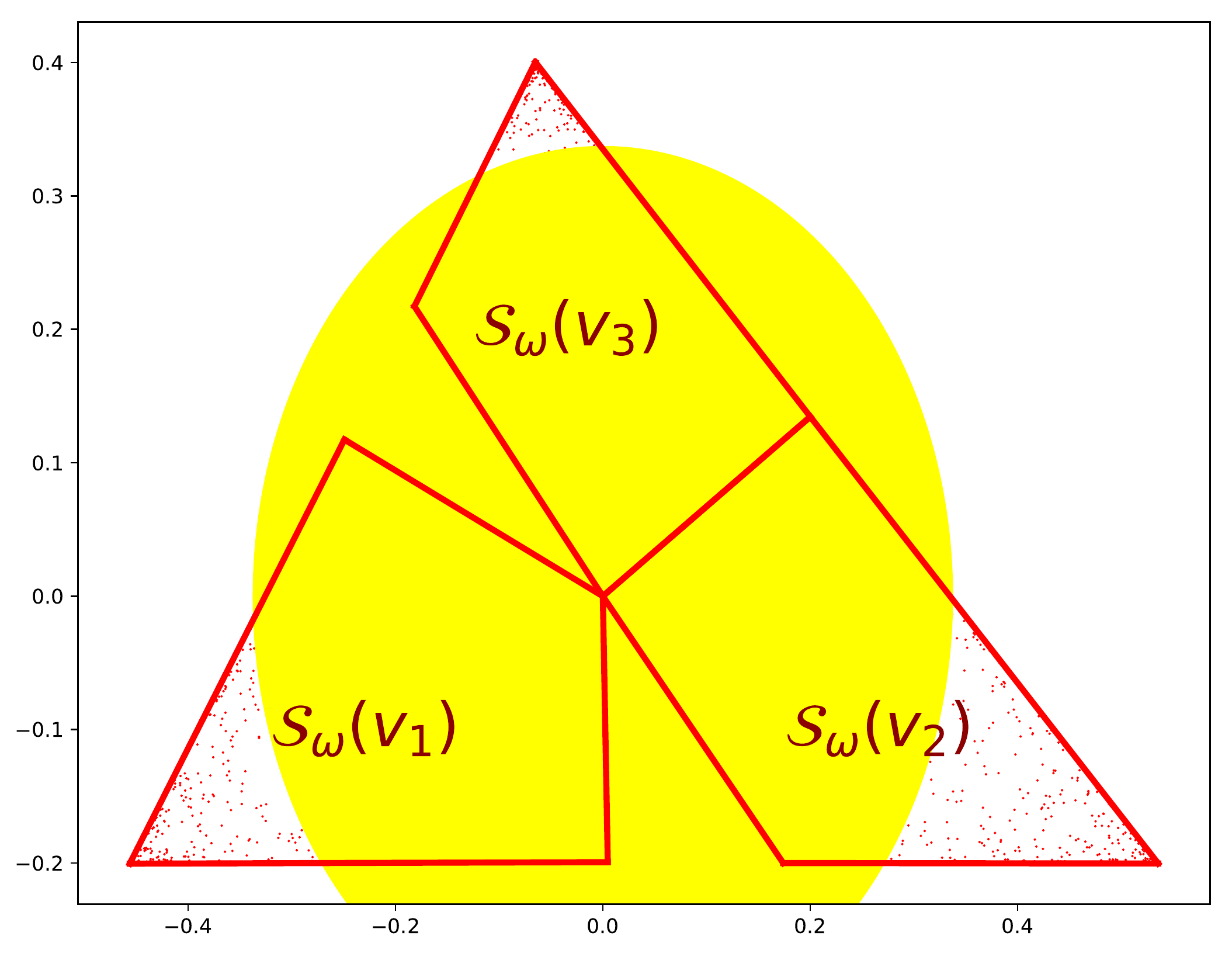}
  \caption{Complete coverage using \\ 3 cones (red) and a ball (yellow).}
  \label{fig:add_r}
\end{subfigure}
\begin{subfigure}{.33\textwidth}
  \centering
  \includegraphics[width=\linewidth]{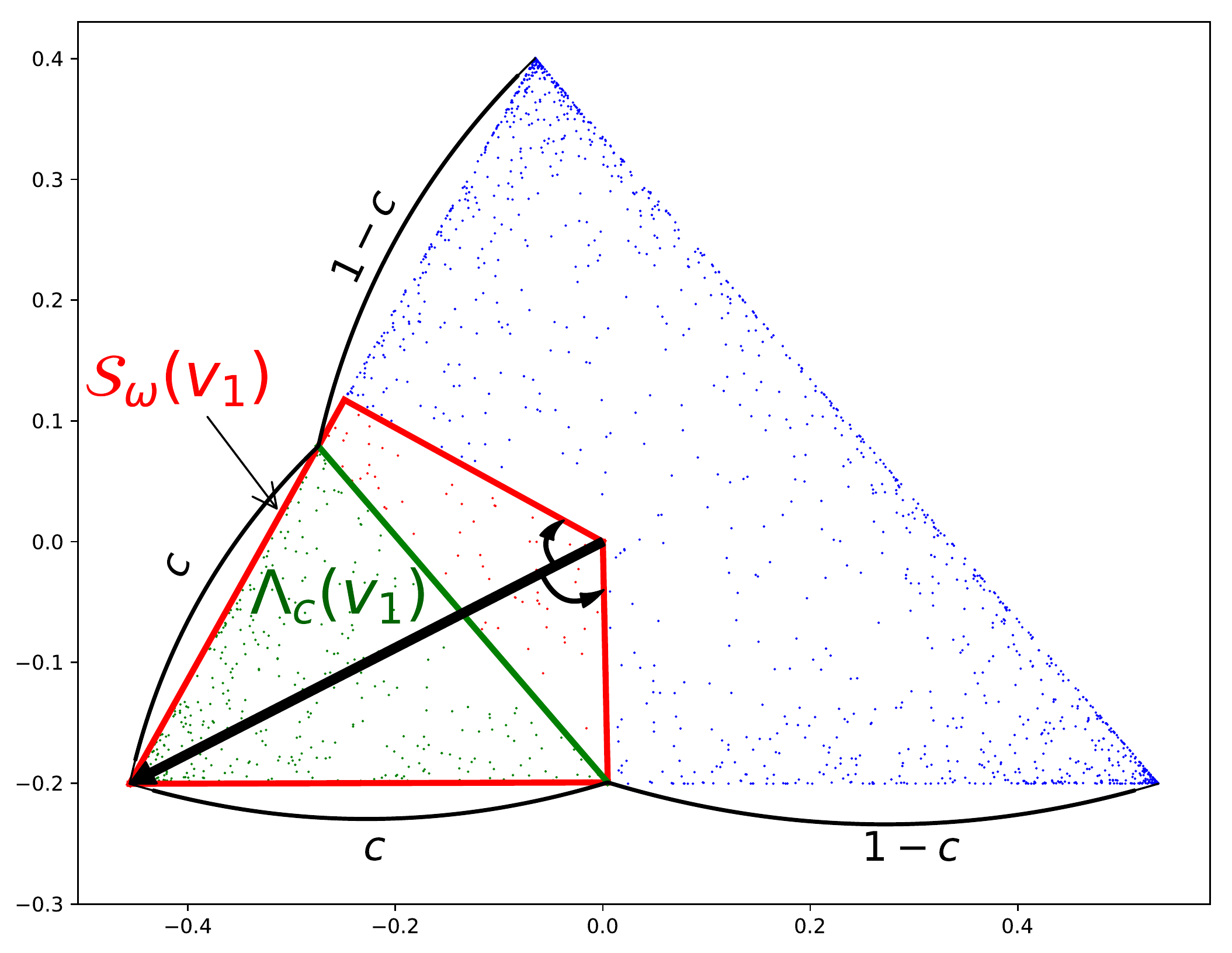}
  \caption{Cap $\Lambda_c(v_1)$ and cone $\mathcal{S}_\omega(v_1).$}
  \label{fig:n_cone}
\end{subfigure}%
\caption{Complete coverage of topic simplex by cones and a spherical ball for $K=3$, $V=3$.}
\label{fig:geom}
\vskip -0.2in
\end{figure*}

\section{Geometric estimation of the topic simplex}
\label{cov_th}
We start by representing $B$ in terms of its convex and angular geometry. First, $B$ is centered at a point
denoted by $C_p$. The centered probability simplex is denoted by
$\Delta^{V-1}_0 := \{x\in\mathbb{R}^V | x + C_p \in \Delta^{V-1}\}$. Then,
write $b_k := \beta_k - C_p \in \Delta^{V-1}_0$ for $k=1,\ldots,K$ and $\pt_m := p_m - C_p \in \Delta^{V-1}_0$ for $m=1,\ldots,M$. Note that re-centering leaves corresponding barycentric coordinates $\theta_m\in\Delta^{K-1}$ unchanged.
Moreover, the extreme points of centered topic simplex $\Bt := \conv\{b_1,\ldots,b_K\}$ can now be represented by 
their directions $v_k\in\mathbb{R}^V$ and corresponding radii $R_k \in \mathbb{R}_+$ such that 
$b_k = R_k v_k$ for any $k = 1,\ldots, K$.


\subsection{Coverage of the topic simplex}

 The first step toward formulating a CoSAC approach is to show how 
 $\Bt$ can be \emph{covered} with exactly $K$ cones and one spherical ball positioned at $C_p$. 
 A cone is defined as set $\mathcal{S}_\omega(v) := \{ p \in \Delta^{V-1}_0 |  d_{\cos}(v,p)<\omega \}$, where 
 we employ the angular distance (a.k.a. cosine distance) $d_{\cos}(v,p) := 1 - \cos(v,p)$ and $\cos(v,p)$ is the cosine of angle $\angle(v,p)$ formed by vectors $v$ and $p$. 

\paragraph{The Conical coverage} It is possible to choose $\omega$ so that the topic simplex can be covered with exactly $K$ cones, that is, $\bigcup\limits_{k=1}^K \mathcal{S}_{\omega}(v_k) \supseteq \Bt$. Moreover, each cone contains exactly one vertex. Suppose that $C_p$ is the \emph{incenter} of the topic simplex $\Bt$, with $r$ being the inradius. The incenter and inradius correspond to the maximum volume sphere contained in $\Bt$. Let $a_{i,k}$ denote the distance between the $i$-th and $k$-th vertex of $\Bt$, with  $a_{min} \leq a_{i,k} \leq a_{max}$ for all $i,k$, and $R_{max},R_{min}$ such that $R_{min}\leq R_k := \|b_k\|_2 \leq R_{max} \ \forall \ k=1,\ldots,K$. Then we can establish
the following.
\begin{prop}
\label{delta_exist}
For simplex $\Bt$ and
$\omega \in (\omega_1,\omega_2)$, where $\omega_1 = 1- r/ R_{max}$ 
and $\omega_2 = \max \{ (a_{max}^2)/(2R_{max}^2),\max\limits_{i,k=1,\ldots,K}(1-\cos(b_i,b_k)\}$,
the cone $\mathcal{S}_\omega(v)$ around any vertex direction $v$ of $\Bt$ contains exactly one vertex. 
Moreover, complete coverage holds: 
$\bigcup\limits_{k=1}^K \mathcal{S}_{\omega}(v_k) \supseteq \Bt$.
\end{prop}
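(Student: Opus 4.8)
The plan is to split the statement into its two halves --- the \emph{separation} property (the cone around a vertex direction contains exactly one vertex, governed by the upper bound $\omega<\omega_2$) and the \emph{covering} property (the $K$ cones together contain $\Bt$, governed by the lower bound $\omega>\omega_1$) --- and to prove each by a direct angular computation. Throughout I work in the centered simplex, so that $C_p$ is the origin and every ``direction'' is measured from the incenter; the inscribed ball of radius $r$ is then centered at the origin and tangent to every facet, and it suffices to cover every \emph{nonzero} point of $\Bt$ (the origin, where a direction is undefined, being the only exception).

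For the covering property I would first reduce to the boundary: for any nonzero $p\in\Bt$ the ray from the origin through $p$ meets some facet $F_j=\conv\{b_i:i\neq j\}$ at a point $q$, and since $p$ and $q$ share a direction it suffices to cover $q$. The key computation is to control the foot of the perpendicular $t_j$ from the origin to the affine hull $H_j$ of $F_j$. Because the inscribed ball touches $F_j$ one has $\|t_j\|=r$, and since $b_i-t_j$ and $q-t_j$ lie in $H_j$ while $t_j$ is normal to $H_j$, one gets $\langle t_j,b_i\rangle=\|t_j\|^2=r^2$ for every $i\neq j$ and likewise $\langle t_j,q\rangle=r^2$; in particular $\|q\|\geq\|t_j\|=r$ because $t_j$ is the closest point of $H_j$ to the origin. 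Writing $q=\sum_{i\neq j}\theta_i b_i$ in barycentric coordinates and using $\|q\|^2=\sum_{i\neq j}\theta_i\langle b_i,q\rangle$, the largest term dominates the convex combination, so $\max_{i\neq j}\langle b_i,q\rangle\geq\|q\|^2$. For the maximizing index $i^\star$ this gives $\cos(b_{i^\star},q)\geq\|q\|/R_{i^\star}\geq r/R_{max}$, hence $\min_i d_{\cos}(b_i,q)\leq 1-r/R_{max}=\omega_1<\omega$, so $q$ (and therefore $p$) lies in a cone. This is the clean, self-contained core of the argument.

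For the separation property I would note that $\mathcal{S}_\omega(v_k)$ always contains $b_k$, since $d_{\cos}(v_k,b_k)=0$, so the content is that it contains \emph{no other} vertex: for each $i\neq k$ I need $d_{\cos}(v_k,b_i)=1-\cos(b_i,b_k)\geq\omega$. I would convert Euclidean inter-vertex distances into angular ones through the law of cosines, $a_{i,k}^2=R_i^2+R_k^2-2R_iR_k\cos(b_i,b_k)$, i.e.\ $1-\cos(b_i,b_k)=\bigl(a_{i,k}^2-(R_i-R_k)^2\bigr)/(2R_iR_k)$, and then bound this quantity uniformly over the pairs $i\neq k$ using the extreme values $a_{min},a_{max},R_{min},R_{max}$ to extract the threshold $\omega_2$.

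The main obstacle is not either computation individually but making the two thresholds compatible: showing the interval $(\omega_1,\omega_2)$ is nonempty and that the separation bound is a genuine upper envelope of $\omega$ valid for \emph{all} cones simultaneously. Concretely, separation forces $\omega$ below the \emph{smallest} pairwise angular distance, and one must check that the covering threshold $\omega_1=1-r/R_{max}$ indeed falls below the separation threshold; this is exactly where the incenter geometry enters, since a highly eccentric simplex (incenter far off-center relative to the vertex radii) inflates $\omega_1$ and can collapse the interval. I expect the delicate bookkeeping to be the passage between the Euclidean quantities $a_{i,k}$ and the cosine distances, together with the choice of which extremes ($R_{max}$ versus $R_{min}$, $a_{max}$ versus $a_{min}$) yield a bound that is at once tight enough for separation and loose enough to sit above $\omega_1$.
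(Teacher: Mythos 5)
Your proposal is correct, and its covering half takes a genuinely different route from the paper's. The paper covers $\Bt$ by decomposing it into sub-cones joining the incenter to the ``facial incenters'' of the facets meeting each vertex and then bounding the angular radius of each sub-cone via $\cos(\phi_2)\leq\min_k r/R_k$; your argument instead pushes an arbitrary nonzero point out along its ray to a facet, writes the exit point as a convex combination $q=\sum_{i\neq j}\theta_i b_i$, and uses $\|q\|^2=\sum_{i\neq j}\theta_i\langle b_i,q\rangle\leq\max_{i\neq j}\langle b_i,q\rangle$ together with $\|q\|\geq r$ to get $\cos(b_{i^\star},q)\geq \|q\|/R_{i^\star}\geq r/R_{max}$. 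This is cleaner and more self-contained: it avoids the unjustified step in the paper that the sub-cone generated by a vertex and its adjacent tangent points stays within angular distance $1-r/R_k$ of the vertex direction (a convex cone can contain directions farther from its axis than its generators are), which is the weakest link in the paper's version. Your separation half coincides with the paper's: both convert edge lengths to central angles by the law of cosines, $1-\cos(b_i,b_k)=\bigl(a_{i,k}^2-(R_i-R_k)^2\bigr)/\bigl(2R_iR_k\bigr)$. When you finish the bookkeeping, note that separation requires $\omega\leq\min_{i\neq k}\bigl(1-\cos(b_i,b_k)\bigr)$, so the usable upper threshold is a \emph{lower} bound on this minimum, e.g. $\bigl(a_{min}^2-(R_{max}-R_{min})^2\bigr)/\bigl(2R_{max}R_{min}\bigr)$; this is what the paper's proof actually works with (it uses $a_{min}$ and records the nonemptiness caveat $2R_{max}^2\geq a_{min}^2$ being violated as the failure mode), even though the displayed $\omega_2$ in the statement is written with $a_{max}$ and a maximum over pairs. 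Your instinct that the compatibility of the two thresholds is the delicate point is exactly right, and your treatment of it is, if anything, more careful than the paper's.
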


We say there is an \emph{angular separation} if $\cos(b_i,b_k) \leq 0$ for any $i,k=1,\ldots,K$ (i.e., the
angles for all pairs are at least $\pi/2$), then $\omega \in \left(1 - \frac{r}{R_{max}}, 1 \right)\neq \emptyset$. 
Thus, under angular separation, the range $\omega$ that allows for full coverage is nonempty independently of $K$. Our result is in agreement with that of \cite{nguyen2015posterior}, whose result suggested that topic simplex $B$ can be consistently estimated without knowing $K$, provided there is a minimum edge length $a_{min} >0$.
The notion of angular separation leads naturally to the Conic Scan-and-Cover algorithm. Before getting there,
we show a series of results allowing us to further extend the range of admissible $\omega$.

The inclusion of a spherical ball centered at $C_p$ allows us to expand substantially the
range of $\omega$ for which conical coverage continues to hold. In particular, we can 
reduce the lower bound on $\omega$ in Proposition \ref{delta_exist}, since we only need to cover the regions 
near the vertices of $\Bt$ with cones using the following proposition.  Fig. \ref{fig:add_r} provides an illustration.
%
\begin{prop}
\label{r_delta}
Let $\mathscr{B}(C_p,\mathcal{R})=\{ \tilde{p} \in \mathbb{R}^{V} |  \|\tilde{p} -C_p\|_2 \leq \mathcal{R} \}$, $\mathcal{R} > 0$; $\omega_1,\omega_2$ given in Prop. \ref{delta_exist}, and
\begin{equation}
\label{w3}
\omega_3 := 1- \min\bigg\{\underset{i,k}\min\left( \frac{R_k \sin^2(b_i,b_k)}{\mathcal{R}} + \cos(b_i,b_k)\sqrt{1-\frac{R_k^2 \sin^2(b_i,b_j)}{\mathcal{R}^2}}\right),1\bigg\},
\end{equation}
then we have 
$\bigcup\limits_{k=1}^K \mathcal{S}_{\omega}(v_k) \cup {\mathscr{B}(C_p,\mathcal{R})} \supseteq \Bt$ whenever $\omega \in (\min\{\omega_1, \omega_3\},\omega_2)$.
\end{prop}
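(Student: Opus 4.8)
The plan is to split the admissible range at $\omega_1$ and to isolate the genuinely new content. For $\omega\in(\omega_1,\omega_2)$ Proposition~\ref{delta_exist} already yields $\bigcup_k\mathcal S_\omega(v_k)\supseteq\Bt$, so adjoining the ball $\mathscr B(C_p,\mathcal R)$ can only enlarge the covered set and coverage is immediate. Since $\min\{\omega_1,\omega_3\}=\omega_1$ whenever $\omega_3\ge\omega_1$, the only regime requiring a new argument is $\omega_3<\omega_1$ together with $\omega\in(\omega_3,\omega_1]$. There I would show that the ball absorbs precisely the part of $\Bt$ near $C_p$ that the now-narrower cones fail to reach, so that the threshold can be pushed down from $\omega_1=1-r/R_{max}$ to $\omega_3$.

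First I would fix an arbitrary $\tilde p\in\Bt$ and dispose of the easy alternative: if the distance to the centre satisfies $\|\tilde p\|_2\le\mathcal R$, then $\tilde p$ is covered by $\mathscr B(C_p,\mathcal R)$ and we are done. Otherwise $\|\tilde p\|_2>\mathcal R$, and because cone membership $d_{\cos}(v_k,\tilde p)<\omega$ depends only on the \emph{direction} of $\tilde p$, the task becomes purely angular: exhibit a $k$ with $\cos(\tilde p,v_k)>1-\omega$. The strategy is to control the largest angular gap to the nearest vertex direction over all points of $\Bt$ surviving outside the ball, and to argue that this extremal gap is attained in the two-dimensional plane spanned by $C_p$ and a pair of vertices $b_i,b_k$. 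Heuristically, a point that is angularly farthest from every $v_k$ sits on the low-dimensional skeleton of $\Bt$, while the constraint $\|\tilde p\|_2>\mathcal R$ forces the binding configuration onto the sphere of radius $\mathcal R$; for a simplex these two requirements localise the extremiser to a single such pairwise plane.

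The heart of the argument is then a single planar computation. In the plane of $b_i,b_k$ I would locate the binding radius-$\mathcal R$ point $q$ on the $b_k$ side --- the crossover from ``covered by the ball'' to ``must be covered by the cone around $v_k$'' --- and evaluate $\angle(q,v_k)$. Writing $\phi=\angle(b_i,b_k)$ and applying the law of sines in the triangle $C_p\,b_k\,q$ with $\|C_p b_k\|_2=R_k$ and $\|C_p q\|_2=\mathcal R$, one obtains $\angle(q,v_k)=\phi-\xi$ with $\sin\xi=R_k\sin(b_i,b_k)/\mathcal R$, whence
\begin{equation*}
\cos(q,v_k)=\cos(\phi-\xi)=\frac{R_k\sin^2(b_i,b_k)}{\mathcal R}+\cos(b_i,b_k)\sqrt{1-\frac{R_k^2\sin^2(b_i,b_k)}{\mathcal R^2}} .
\end{equation*}
Demanding that the cone reach this point, i.e. $\omega>1-\cos(q,v_k)$, and then retaining the worst (smallest-cosine) ordered pair $(i,k)$ reproduces exactly the threshold $\omega_3$; the outer $\min\{\,\cdot\,,1\}$ simply keeps the quantity in $[0,1]$ and neutralises the degenerate pairs with $R_k\sin(b_i,b_k)>\mathcal R$, for which $\xi$ (and hence the constraint) is vacuous. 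Combining with the first paragraph gives coverage for every $\omega\in(\min\{\omega_1,\omega_3\},\omega_2)$.

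I expect the main obstacle to be the reduction step rather than the trigonometry. The delicate points are (i) rigorously certifying that the extremal uncovered direction lives in a pairwise plane, not in the interior of a higher-dimensional face of $\Bt$, and (ii) correctly pinning down which radius-$\mathcal R$ crossing is binding, since it is this identification --- and not the angle bisector crossover that would govern the ball-free setting --- that produces the $\mathcal R$-dependence in $\omega_3$. Once that reduction is secured, the planar identity above is routine, and the quantitative benefit of the ball, namely lowering the admissible cone threshold from $\omega_1$ to $\omega_3$, falls out directly.
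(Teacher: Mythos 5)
Your route is, at bottom, the paper's route: the published proof is exactly the planar computation you describe --- locate the point where the sphere of radius $\mathcal{R}$ first cuts the edge running from $b_k$ toward $b_i$, apply the law of sines in the triangle formed by $C_p$, $b_k$ and that crossing point, and take the worst pair $(i,k)$ --- and your preliminary reductions (for $\omega>\omega_1$ invoke Prop.~\ref{delta_exist}; points with $\|\tilde p\|_2\le\mathcal{R}$ are absorbed by the ball) are a sensible way to organize it. Neither you nor the paper actually justifies that the extremal uncovered direction lies on an edge rather than in the interior of a higher-dimensional face of $\Bt$; you at least flag this as the delicate step, whereas the paper passes over it silently.

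The genuine problem is in your law-of-sines step. You set $\phi=\angle(b_i,b_k)$, which in your write-up (``the plane spanned by $C_p$ and a pair of vertices'') can only mean the angle at $C_p$ between the two vertex directions, and assert $\angle(q,v_k)=\phi-\xi$ with $\sin\xi=R_k\sin\phi/\mathcal{R}$. That relation is false: in the triangle $C_p\,b_k\,q$ the side $C_pq=\mathcal{R}$ is opposite the angle \emph{at the vertex} $b_k$ (between the edge direction $b_i-b_k$ and the segment $C_p-b_k$), call it $\gamma_k$, so the sine law gives $\sin\bigl(\angle(q,v_k)+\gamma_k\bigr)=R_k\sin\gamma_k/\mathcal{R}$, i.e.\ $\angle(q,v_k)=\xi-\gamma_k$ with $\sin\xi=R_k\sin\gamma_k/\mathcal{R}$; the central angle never enters. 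Concretely, for an equilateral triangle with $R_k=1$ the central angle is $2\pi/3$, so your $\sin\xi=\sqrt{3}/(2\mathcal{R})>1$ for any $\mathcal{R}<\sqrt{3}/2$ (e.g.\ $\mathcal{R}$ near the inradius $1/2$) and no such $\xi$ exists, while the correct computation with $\gamma_k=\pi/6$ goes through and reproduces \eqref{w3}. Your final display coincides with \eqref{w3} only because the paper overloads the symbol: in Fig.~\ref{fig:R_delta} and in the identity $\cos\phi_{i,k}+\cot(b_i,b_k)\sin\phi_{i,k}=R_k/\mathcal{R}$, $\angle(b_i,b_k)$ denotes the angle at the vertex $C=b_k$, whereas elsewhere in the paper (e.g.\ the angular-separation condition $\cos(b_i,b_k)\le0$) it denotes the angle at $C_p$. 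So to repair your argument you need only re-read $(b_i,b_k)$ as $\gamma_k$ and replace $\phi-\xi$ by $\xi-\gamma_k$ (harmless for the cosine, but a different triangle identity). A last small point: the outer $\min\{\cdot,1\}$ in \eqref{w3} merely keeps $\omega_3\ge0$; it does not, as you suggest, neutralise pairs for which the sphere misses the edge entirely ($R_k\sin\gamma_k>\mathcal{R}$), where the square root is undefined --- a case neither your proposal nor the paper addresses.
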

Notice that as $\mathcal{R} \rightarrow R_{max}$ , the value of 
$\omega_3 \rightarrow 0$. Hence if $\mathcal{R} \leq R_{min} \approx R_{max}$, the
admissible range for $\omega$ in Prop.  \ref{r_delta} results in a substantial strengthening from Prop. \ref{delta_exist}. It is worth noting that the above two geometric propositions do not require any distributional properties inside the simplex.

\paragraph{Coverage leftovers} In practice complete coverage may fail if
$\omega$ and $\mathcal{R}$ are chosen outside of corresponding ranges suggested by the previous two propositions.
In that case, it is useful to note that leftover regions will have a very low mass. 
Next we quantify the mass inside a cone that \emph{does} contain a vertex, which allows us to \emph{reject} a cone that has low mass, therefore not containing a vertex in it.
\begin{prop}
\label{n_eq} The cone $S_\omega(v_1)$ whose axis is a topic direction $v_1$ has mass
\begin{eqnarray}
\left.\begin{aligned}
& \P(\mathcal{S}_\omega(v_1)) > \P(\Lambda_{c}(b_1)) = \frac{\int_{1-c}^1\theta_1^{\alpha_1-1}(1- \theta_1)^{\sum_{i\neq 1}\alpha_i -1}\mathrm{d}\theta_1}{\int_{0}^1\theta_1^{\alpha_1-1}(1- \theta_1)^{\sum_{i\neq 1}\alpha_i -1}\mathrm{d}\theta_1} = \\
& \frac{c^{\sum_{i \neq 1}\alpha_i}(1-c)^{\alpha_1}\Gamma(\sum_{i=1}^K \alpha_i)}{(\sum_{i \neq 1} \alpha_i) \Gamma(\alpha_1) \Gamma(\sum_{i \neq 1} \alpha_i)}
\biggr [1 + \frac{c\sum_{i=1}^K \alpha_i }{\sum_{i \neq 1} \alpha_i + 1}  + \frac{c^2(\sum_{i=1}^K \alpha_i)(\sum_{i=1}^K \alpha_i +1) }{(\sum_{i \neq 1} \alpha_i + 1)(\sum_{i \neq 1} \alpha_i + 2)} + \cdots  
\biggr ],
\end{aligned}\right.
\end{eqnarray}
where $\Lambda_{c}(b_1)$ is the simplicial cap of $\mathcal{S}_\omega(v_1)$  which is composed
of vertex $b_1$ and a base parallel to the corresponding base of $\Bt$ and cutting adjacent edges of $\Bt$ in the ratio $c:(1-c)$. 
\end{prop}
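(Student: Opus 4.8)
The plan is to prove the inequality and the exact formula separately, since they involve quite different arguments.

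**Setting up the containment.** First I would establish the inequality $\mathbb{P}(\mathcal{S}_\omega(v_1)) > \mathbb{P}(\Lambda_c(b_1))$ by showing that the simplicial cap $\Lambda_c(b_1)$ is contained in the cone $\mathcal{S}_\omega(v_1)$. The cap is the region of $\tilde{B}$ cut off near vertex $b_1$ by a hyperplane parallel to the opposite face, slicing the adjacent edges at ratio $c:(1-c)$. Any point $p$ in this cap lies "close" to the vertex direction $v_1$ in angular terms: as we move from the base of the cap toward $b_1$, the angular distance $d_{\cos}(v_1, p)$ shrinks. The key geometric step is to verify that the maximal angular distance over the cap equals $\omega$, i.e. that the cap sits exactly inside $\mathcal{S}_\omega(v_1)$ for the appropriate matching of $c$ and $\omega$. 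Because the cone captures everything within angular distance $\omega$ of $v_1$ while the cap is a strictly smaller convex sliver, the containment is proper and the strict inequality on probability mass follows, since the Dirichlet density places positive mass on the cone region outside the cap.

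**Computing the cap mass.** Next I would compute $\mathbb{P}(\Lambda_c(b_1))$ exactly. The crucial observation is that in barycentric coordinates $\theta \sim \mathrm{Dir}_K(\alpha)$, the cap $\Lambda_c(b_1)$ corresponds precisely to the event $\{\theta_1 > 1 - c\}$: a point $p = \sum_k \beta_k \theta_k$ lies in the cap exactly when its weight on vertex $\beta_1$ exceeds the threshold set by the cutting ratio. Hence the mass reduces to the marginal probability $\mathbb{P}(\theta_1 > 1-c)$. Since the marginal of a single Dirichlet coordinate is $\theta_1 \sim \mathrm{Beta}(\alpha_1, \sum_{i\neq 1}\alpha_i)$, this is the incomplete Beta integral displayed in the statement, namely the ratio of $\int_{1-c}^1 \theta_1^{\alpha_1 - 1}(1-\theta_1)^{\sum_{i\neq 1}\alpha_i - 1}\,d\theta_1$ to the normalizing Beta function.

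**Extracting the series.** Finally I would derive the closed-form series. Writing $s := \sum_{i \neq 1}\alpha_i$ and substituting $u = 1 - \theta_1$ (so $u$ ranges over $[0, c]$), the numerator becomes $\int_0^c u^{s-1}(1-u)^{\alpha_1 - 1}\,du$. Expanding $(1-u)^{\alpha_1 - 1}$ is not quite the right route; instead I would integrate by parts repeatedly on the factor $u^{s-1}$, or equivalently use the standard hypergeometric expansion of the incomplete Beta function $B(c; s, \alpha_1)$. This yields the leading term $\frac{c^s (1-c)^{\alpha_1}}{s}$ times the normalizing constant $\frac{\Gamma(\sum_i \alpha_i)}{\Gamma(\alpha_1)\Gamma(s)}$, followed by the bracketed series whose $j$-th term carries $c^j$ and ratios of rising factorials of $\sum_i \alpha_i$ against $s$, matching the displayed expansion $1 + \frac{c\sum_i \alpha_i}{s+1} + \frac{c^2(\sum_i \alpha_i)(\sum_i \alpha_i + 1)}{(s+1)(s+2)} + \cdots$.

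**Main obstacle.** I expect the geometric containment step to be the genuine difficulty: one must carefully relate the angular aperture $\omega$ of the cone to the cutting ratio $c$ of the cap and confirm that the cap lies entirely within the cone. The probabilistic computation, by contrast, is a routine reduction to a Beta marginal followed by a known incomplete-Beta series, so the effort there is bookkeeping rather than conceptual.
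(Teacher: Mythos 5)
Your proposal is correct and follows essentially the same route the paper takes: reduce $\P(\Lambda_c(b_1))$ to the marginal event $\{\theta_1>1-c\}$ for the $\mathrm{Beta}(\alpha_1,\sum_{i\neq 1}\alpha_i)$ coordinate and invoke the standard incomplete-beta (hypergeometric) series, with the strict inequality coming from the cap being inscribed in the cone. In fact the paper's own proof consists of a single line citing a reference for the truncated-beta calculation, so your sketch supplies more of the detail (the containment $\Lambda_c(b_1)\subset\mathcal{S}_\omega(v_1)$ and the barycentric reduction) than the paper itself records.
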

See Fig. \ref{fig:n_cone} for an illustration for the simplicial cap described in the proposition.
Given the lower bound for the mass around a cone containing a vertex,
we have arrived at the following guarantee.
\begin{prop}
\label{n_theorem} For $\lambda\in (0,1)$, let  $c_\lambda$ be such that $\lambda =\underset{k} \min \; \P(\Lambda_{c_{\lambda}}(b_k))$ and let $\omega_\lambda$ be such that
\begin{equation}
\label{c_lambda}
c_\lambda=\left(\left(2\sqrt{1- \frac{r^2}{R_{max}^2}}\right) \  \left( \sin(d)\cot(\arccos(1-\omega_\lambda)) + \cos(d)  \right) \right)^{-1},
\end{equation}
where angle $d \leq \underset{i,k}{\operatorname{min}}\, \angle(b_k, b_k - b_i)$. Then, as long as
\begin{equation}
\label{omega_lambda}
\omega\in\left(\omega_\lambda, \max\left(\frac{a_{max}^2}{2R_{max}^2},\max\limits_{i,k=1,\ldots,K}(1-\cos(b_i,b_k)\right)\right),
\end{equation} 
the bound $\P(\mathcal{S}_{\omega}(v_k)) \geq \lambda\text{ holds for all }k=1,\ldots, K$.
\end{prop}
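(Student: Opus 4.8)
The plan is to bound the cone mass from below by the mass of a suitable inscribed simplicial cap, for which Proposition~\ref{n_eq} already supplies an explicit formula, and then to translate the requirement $\P(\Lambda_{c_\lambda}(b_k))\ge\lambda$ into a requirement on the cone half-angle $\gamma:=\arccos(1-\omega)$. The two ingredients I would isolate first are: (i) \emph{monotonicity of the cap mass}, namely that $c\mapsto\P(\Lambda_c(b_k))$ is continuous and strictly increasing on $(0,1)$ --- immediate from the closed form in Proposition~\ref{n_eq}, since enlarging $c$ enlarges the integration region --- which guarantees that $c_\lambda$ is well defined and that the defining identity $\lambda=\min_k\P(\Lambda_{c_\lambda}(b_k))$ yields $\P(\Lambda_{c_\lambda}(b_k))\ge\lambda$ for \emph{every} $k$; and (ii) \emph{monotonicity in the cone}, namely that containment $\Lambda_c(b_k)\subseteq\mathcal{S}_\omega(v_k)$ forces $\P(\mathcal{S}_\omega(v_k))\ge\P(\Lambda_c(b_k))$. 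Given these, the whole statement reduces to a single geometric claim: if $\omega\ge\omega_\lambda$ then the cap $\Lambda_{c_\lambda}(b_k)$ sits inside the cone $\mathcal{S}_\omega(v_k)$ for all $k$.

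To prove that containment I would work in the plane spanned by $C_p$ (the origin in centred coordinates), the vertex $b_k$, and one base corner of the cap. The base of $\Lambda_{c}(b_k)$ meets the edge $b_kb_i$ at $q_i=(1-c)b_k+cb_i$, and the points of the cap farthest (in angle) from the axis $v_k$ are exactly these corners, so it suffices to keep every $q_i$ inside the cone, i.e. $\angle(v_k,q_i)\le\gamma$. In the triangle $C_p\,b_k\,q_i$ the side $C_pb_k$ has length $R_k$, the side $b_kq_i$ has length $c\,a_{i,k}$, and the included angle at $b_k$ equals $\phi_{ik}:=\angle(b_k,b_k-b_i)$; elementary planar trigonometry then gives $\tan\angle(v_k,q_i)=\dfrac{c\,a_{i,k}\sin\phi_{ik}}{R_k-c\,a_{i,k}\cos\phi_{ik}}$, and setting this angle equal to $\gamma$ and solving for $c$ produces the critical cap parameter $c=\dfrac{R_k}{a_{i,k}\,(\sin\phi_{ik}\cot\gamma+\cos\phi_{ik})}$.

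It remains to make this threshold uniform in $k$ and $i$, which is where the constants in~\eqref{c_lambda} come from. I would bound the edge length using the insphere: any two vertices $b_i,b_k$ lie on a common facet whose distance to the incenter $C_p$ is the inradius $r$, so their components along that facet have norm at most $\sqrt{R_{max}^2-r^2}$, whence $a_{i,k}\le 2\sqrt{R_{max}^2-r^2}=2R_{max}\sqrt{1-r^2/R_{max}^2}$; together with $R_k$ being essentially $R_{max}$ in the near-regular regime ($R_{min}\approx R_{max}$, as already noted after Proposition~\ref{r_delta}) this replaces $R_k/a_{i,k}$ by $\big(2\sqrt{1-r^2/R_{max}^2}\big)^{-1}$. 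Replacing each vertex angle $\phi_{ik}$ by the uniform lower bound $d\le\min_{i,k}\angle(b_k,b_k-b_i)$ then turns the per-corner critical value into the single expression~\eqref{c_lambda} defining $\omega_\lambda$. Since $\gamma=\arccos(1-\omega)$ increases with $\omega$ and $\cot\gamma$ decreases, the critical cap parameter is increasing in $\omega$; hence for $\omega>\omega_\lambda$ the cap $\Lambda_{c_\lambda}(b_k)$ is strictly contained in $\mathcal{S}_\omega(v_k)$, and chaining the two monotonicities with Proposition~\ref{n_eq} gives $\P(\mathcal{S}_\omega(v_k))>\P(\Lambda_{c_\lambda}(b_k))\ge\lambda$ for all $k$. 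The upper endpoint of the admissible interval is exactly $\omega_2$ from Proposition~\ref{delta_exist}, and I would invoke that proposition to guarantee that each cone still captures a single vertex, so that the simplicial-cap picture underlying Proposition~\ref{n_eq} and the planar argument above remain valid and the relevant angles stay acute.

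The step I expect to be the real obstacle is the uniformization in the last paragraph: verifying that substituting the extremal constants $a_{i,k}\le 2\sqrt{R_{max}^2-r^2}$ and $\phi_{ik}\ge d$ into the per-corner formula moves the critical $c$ in the \emph{safe} direction simultaneously for every vertex and every base corner. Because $\phi\mapsto\sin\phi\cot\gamma+\cos\phi$ is not globally monotone, controlling its sign of variation on the admissible angular range (and confirming that the $R_{min}\approx R_{max}$ simplification is legitimate rather than merely convenient) is the delicate part; the planar trigonometric identity itself and the two monotonicity lemmas are routine by comparison.
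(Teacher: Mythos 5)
Your proposal follows essentially the same route as the paper's proof: the same triangle with vertices $C_p$, $b_k$, and the edge cut point, the same trigonometric identity $R_k/(a_{i,k}c)=\sin\psi\cot\gamma+\cos\psi$ (the paper derives it via the sine law), the same insphere bound $a_{i,k}\le 2\sqrt{R_{max}^2-r^2}$, and the same uniformization via the angle $d$ and the factor $\bigl(2\sqrt{1-r^2/R_{max}^2}\bigr)^{-1}$. The one step you flag as delicate is resolved in the paper exactly as you anticipate: since $\arccos(1-\omega_\lambda)\le \pi/2-\angle(b_k,b_k-b_i)$, the map $\psi\mapsto\sin\psi\cot\gamma+\cos\psi=\sin(\psi+\gamma)/\sin\gamma$ is increasing on the admissible angular range (checked by the first-derivative rule), with the cosine law used to identify the extremal vertex pair, and the $R_{min}\approx R_{max}$ simplification you question is likewise implicit in the paper's bound $a_{i,k}/R_k\le 2\sqrt{R_{max}^2-r^2}/R_{max}$.
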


\subsection{CoSAC: Conic Scan-and-Cover algorithm}
\label{cone_s}
Having laid out the geometric foundations, we are ready
to present the Conic Scan-and-Cover (CoSAC) algorithm, which is a scanning procedure for detecting the
presence of simplicial vertices based on data drawn randomly from the simplex. The idea is simple:
iteratively pick the farthest point from the center estimate $\hat C_p := \frac{1}{M}\sum_m p_m $,
say $v$, then construct a cone $\mathcal{S}_\omega(v)$ for some suitably chosen $\omega$, and
remove all the data residing in this cone. Repeat until there is no data point left.

Specifically,
let $A = \{1,\ldots,M\}$ be the index set of the initially unseen data, then set
$v := \argmax\limits_{\pt_m:m\in A}\|\pt_m\|_2$ and update $A := A\setminus \mathcal{S}_\omega(v)$.
The parameter $\omega$ needs to be sufficiently large to ensure that the farthest point is a good estimate
of a true vertex, and that the scan will be completed in exactly $K$ iterations; $\omega$ needs to be
not too large, so that  $\mathcal{S}_\omega(v)$ does not contain more than one vertex. 
The existence of such $\omega$ is guaranteed by Prop. \ref{delta_exist}. 
In particular, for an equilateral $\Bt$, the condition of the Prop. \ref{delta_exist} is satisfied as long as $\omega \in (1-1/\sqrt{K-1},1+1/(K-1))$. 

In our setting, $K$ is unknown. A smaller $\omega$ would be a more robust choice, and accordingly the set 
$A$ will likely remain non-empty after $K$ iterations. See the illustration of Fig. \ref{fig:p_remain}, where the blue regions correspond to $A$ after $K=3$ iterations of the scan. As a result, we proceed by adopting a stopping criteria based on Prop. \ref{r_delta}:
the procedure is stopped as soon as $\forall \ m\in A \ \|\pt_m\|_2<\mathcal{R}$, which allows us to complete the scan in $K$ iterations (as in Fig. \ref{fig:add_r} for $K=3$).


The CoSAC algorithm is formally presented by Algorithm \ref{angle_simple}. Its running is illustrated in Fig. \ref{fig:algo_iter}, where we show iterations 1, 26, 29, 30 of the algorithm by plotting norms of the centered documents in the active set $A$ and cone $\mathcal{S}_\omega(v)$ against cosine distance to the chosen direction of a topic. Iteration 30 (right) satisfies stopping criteria and therefore CoSAC recovered correct $K=30$. Note that this type of visual representation can be useful in practice to verify choices of $\omega$ and $\mathcal{R}$.
The following theorem establishes the consistency of the CoSAC procedure.
\begin{thm}
\label{consistency}
Suppose $\{\beta_1,\dots,\beta_K\}$ are the true topics, incenter $C_p$ is
given, $\theta_m\thicksim\Dir_K(\alpha)$ and $p_m := \sum_k \beta_k \theta_{mk}$ for $m=1,\ldots,M$ and $\alpha \in \mathbb{R}_{+}^K$. Let $\hat{K}$ be the estimated number of topics,
$\{\hat \beta_1,\dots,\hat \beta_{\hat{K}}\}$ be the output of Algorithm \ref{angle_simple} trained with $\omega$ and $\mathcal{R}$ as in Prop. \ref{r_delta}. Then $\forall \ \epsilon >0$,
\begin{eqnarray*}
\P \left ( \left \{\underset{j \in \{1, \dots, \hat{K}\}}\min\|\beta_i - \hat \beta_j \| > \epsilon \ \text{, for any $i$}\in \ \{1,\dots,\hat{K}\} \right \} \cup\{ K \neq \hat{K}\} \right ) \rightarrow 0\text{ as }M\rightarrow \infty.
\end{eqnarray*}
\end{thm}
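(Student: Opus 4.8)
The plan is to reduce this random recovery problem to the deterministic geometric statements already supplied by Propositions \ref{delta_exist}--\ref{r_delta}, plus a single probabilistic ingredient: that the Dirichlet sample eventually populates every neighborhood of every vertex. I would fix $\omega$ and $\mathcal{R}$ strictly inside the admissible ranges of Prop. \ref{r_delta} (in particular with $\mathcal{R}<R_{min}$), and for a small $\delta=\delta(\epsilon)>0$ define the ``good sampling'' event $\mathcal{G}_M$ on which every vertex $b_k$ of $\Bt$ admits at least one centered document $\pt_m$ with $\|\pt_m-b_k\|_2<\delta$. Since $p_m=\sum_k\beta_k\theta_{mk}$ gives $\pt_m=\sum_k b_k\theta_{mk}$ and $\theta_m\sim\Dir_K(\alpha)$ has strictly positive density on the open simplex, each corner neighborhood carries positive probability; a union bound over the finitely many vertices yields $\P(\mathcal{G}_M)\to1$ as $M\to\infty$.

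The first analytic step is to show that on $\mathcal{G}_M$ the farthest active point is within $\epsilon$ of a true vertex. Because $\Bt$ is convex and $\|\cdot\|_2$ is convex, $\sup_{\pt\in\Bt}\|\pt\|_2=R_{max}$ is attained only at vertices, and for $\delta$ small the super-level set $\{\pt\in\Bt:\|\pt\|_2>R_{max}-\delta\}$ is contained in $\epsilon$-balls around the norm-maximizing vertices; on $\mathcal{G}_M$ the initial maximizer $v=\argmax_{m\in A}\|\pt_m\|_2$ lies in this set, so the reported $\hat\beta_1$ is within $\epsilon$ of some true $\beta_i$. I then invoke the separation half of Prop. \ref{delta_exist}/\ref{r_delta}: the cone $\mathcal{S}_\omega(v)$ around this near-vertex direction contains exactly one vertex, and since $\omega$ was chosen interior, the $O(\delta)$ rotation of the axis from the true $v_k$ to the empirical $v$ leaves the ``exactly one vertex'' and coverage conclusions intact by continuity. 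Removing $\mathcal{S}_\omega(v)$ therefore deletes a full $\delta$-neighborhood of that single vertex.

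The remainder is an induction on iterations. After $k$ cones have removed the neighborhoods of $k$ distinct vertices, any uncovered vertex $b_{(k+1)}$ (when $k<K$) still has norm $\geq R_{min}>\mathcal{R}$ and, on $\mathcal{G}_M$, a nearby active point of norm $>\mathcal{R}$, so the stopping test $\forall m\in A:\|\pt_m\|_2<\mathcal{R}$ fails and the scan continues; moreover the removed neighborhoods prevent any later maximizer from landing near an already-covered vertex, so each iteration captures a new vertex and its $\hat\beta_{k+1}$ is again $\epsilon$-close to a distinct true topic. Once $k=K$, the coverage half of Prop. \ref{r_delta} guarantees $\bigcup_k\mathcal{S}_\omega(v_k)\cup\mathscr{B}(C_p,\mathcal{R})\supseteq\Bt$, so every remaining $\pt_m$ lies in the ball, the stopping test is met, and the scan halts with $\hat K=K$. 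Hence on $\mathcal{G}_M$ the assignment of each true topic to its capturing cone is a bijection with $\min_j\|\beta_i-\hat\beta_j\|_2\leq\epsilon$ for every $i$, so the event in the theorem is contained in $\mathcal{G}_M^c$, whose probability tends to $0$.

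The main obstacle is the propagation from the true vertex directions $v_k$, for which Propositions \ref{delta_exist}--\ref{r_delta} are stated, to the empirical directions actually used by the algorithm, compounded over up to $K$ iterations: I must verify that ``exactly one vertex per cone'' and ``complete coverage'' are stable under the $O(\delta)$ rotation of each cone axis, and that after each removal the leftover region's norm is maximized near an as-yet-uncovered vertex rather than on a simplicial cap base or in the interior. Making the order-of-discovery and the uniform-over-iterations control precise --- while tying $\delta$ to both $\epsilon$ and the interior gap of the $\omega$-range --- is the delicate part; by contrast the sampling-density bound $\P(\mathcal{G}_M)\to1$ and the termination bookkeeping are comparatively routine.
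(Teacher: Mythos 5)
Your proposal takes essentially the same route as the paper's own proof: reduce the theorem to the event that some document lies within $\epsilon$ of every vertex, show this event has probability tending to one by a union bound over the $K$ vertices, and let the coverage geometry of Propositions \ref{delta_exist}--\ref{r_delta} deliver the conclusion on that event. The only differences are that the paper establishes the sampling step with an explicit truncated-Beta tail bound (Lemma \ref{single document}) rather than soft positivity of the Dirichlet density, and that the deterministic induction over iterations which you carefully elaborate (and rightly flag as the delicate part) is simply asserted in the paper with the phrase ``from the description of the algorithm it suffices to prove\ldots''.
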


\paragraph{Remark} 
We found the choices $\omega=0.6$ and $\mathcal{R}$ to be median of $\{\|\pt_1\|_2,\ldots,\|\pt_M\|_2\}$ to be robust in practice and agreeing with our theoretical results. From Prop. \ref{n_eq} it follows that choosing $\mathcal{R}$ as median length is equivalent to choosing $\omega$ resulting in an edge cut ratio $c$ such that $1-\frac{K}{K-1}(\frac{c}{1-c})^{1-1/K} \geq 1/2$, then  $c\leq (\frac{K-1}{2K})^{K/(K-1)}$, which, for any equilateral topic simplex $B$, is satisfied by setting $\omega\in(0.3,1)$,
provided that $K\leq 2000$ based on the Eq. \eqref{c_lambda}.

\section{Document Conic Scan-and-Cover algorithm}
\label{doc_scan}
In the topic modeling problem, $p_m$ for $m=1,\ldots, M$ are \emph{not} given. Instead,
under the bag-of-words assumption, we are given the frequencies of words in documents $w_1,\ldots,w_M$ 
which provide a point estimate $\bar w_m := w_m/N_m$ for the $p_m$. Clearly, if number of documents $M\rightarrow \infty$ and length of documents $N_m\rightarrow \infty\ \forall m$, we can use 
Algorithm \ref{angle_simple} with the plug-in estimates $\bar w_m$ in place of $p_m$, since 
$\bar w_m \rightarrow p_m$. Moreover, $C_p$ will be estimated by 
$\hat C_p := \frac{1}{M}\sum \bar w_m$.
In practice, $M$ and $N_m$ are finite, some of which may take relatively small values.
Taking the topic direction to be the farthest point in the topic simplex, i.e.,  
$v = \argmax\limits_{\wt_m:m\in A}\|\wt_m\|_2$, where $\wt_m:=\bar w_m - \hat C_p \in \Delta^{V-1}_0$,
may no longer yield a robust estimate, because the variance of this topic direction estimator can be quite high
(in Proposition \ref{supp:var} we show that it is upper bounded with $(1-1/V)/N_m$).

To obtain improved estimates, we propose a technique that we call ``mean-shifting''. 
Instead of taking the farthest point in the simplex, this 
technique is designed to shift the estimate of a topic to a high density region, where true topics are likely to be found.
Precisely, given a (current) cone $\mathcal{S}_\omega(v)$, we re-position
the cone by updating $v := \argmin\limits_{v} \sum_{m\in \mathcal{S}_\omega(v)}\|\wt_m\|_2(1-\cos(\wt_m,v))$.
In other words, we re-position the cone by centering it around the \emph{mean direction} of the cone weighted by 
the norms of the data points inside, which is simply given by $v \propto \sum_{m \in \mathcal{S}_\omega(v)}\wt_m/\card(\mathcal{S}_\omega(v))$. 
This results in reduced variance of the topic direction estimate, due to the averaging over 
data residing in the cone.  

The mean-shifting technique may be slightly modified and taken as a local update for 
a subsequent optimization which cycles through
the entire set of documents and iteratively updates the cones. The optimization is with respect
to the following weighted spherical k-means objective:
\begin{equation}
\label{sph_means}
\min\limits_{\|v_k\|_2=1, k=1,\ldots K}\sum_{k=1}^K\sum_{m\in S^k(v_k)} \|\wt_m\|_2(1-\cos(v_k, \wt_m)),
\end{equation}
where cones $S^k(v_k) = \{m | d_{\cos}(v_k,\pt_m)<d_{\cos}(v_l,\pt_i)\ \forall l \neq k \}$ 
yield a disjoint data partition $\bigsqcup\limits_{k=1}^K S^k(v_k) = \{1,\ldots,M\}$ (this is different from $\mathcal{S}_\omega(v_k)$).
The rationale of spherical k-means optimization is to use full data for estimation of topic directions, 
hence further reducing the variance due to short documents. The
connection between objective function \eqref{sph_means} and topic simplex estimation is given in the \hyperref[supp:sph]{Supplement}. 
Finally, obtain topic norms $R_k$ along the directions $v_k$ using maximum projection: $R_k := \max\limits_{m:m \in S^k(v_k)} \langle v_k, \wt_m \rangle$. Our entire procedure is summarized in Algorithm \ref{angle_full}.

\paragraph{Remark}
In Step 9 of the algorithm, cone $\mathcal{S}_\omega(v)$ with a very low cardinality, i.e., $\card(\mathcal{S}_\omega(v))< \lambda M$, for some small constant $\lambda$, is discarded because this is likely an outlier region that does not actually contain a true vertex. The choice of $\lambda$ is governed by results of Prop. \ref{n_theorem}. For small $\alpha_k=1/K,\ \forall k$, $\lambda \leq \P(\Lambda_c) \approx \frac{c^{(K-1)/K}}{(K-1)(1-c)}$ and for an equilateral $\Bt$ we can choose $d$  such that $\cos(d)=\sqrt{\frac{K+1}{2K}}$. Plugging these values into Eq. \eqref{c_lambda} leads to  $c=\left(\left(2\sqrt{1- \frac{1}{K^2}}\right) \  \left( \sqrt{\frac{K-1}{2K}}(\frac{1-\omega}{\sqrt{1-(1-\omega)^2}}) + \sqrt{\frac{K+1}{2K}}  \right) \right)^{-1}.$ Now, plugging in $\omega=0.6$ we obtain $\lambda\leq K^{-1}$ for large $K$. Our approximations were based on large $K$ to get a sense of $\lambda$, we now make a conservative choice $\lambda=0.001$, so that $(K)^{-1}>\lambda\ \forall K<1000$. As a result, 
a topic is rejected if the corresponding cone contains less than 0.1\% of the data.

\paragraph{Finding anchor words using Conic Scan-and-Cover}
\label{a_rec}
Another approach to reduce the noise is to consider the problem from a different viewpoint, where Algorithm \ref{angle_simple} will prove itself useful. RecoverKL by \cite{arora2012practical} can identify topics with diminishing
errors (in number of documents $M$), \emph{provided} that topics contain anchor words. The problem of finding anchor words geometrically reduces to identifying rows of the word-to-word co-occurrence matrix that form a simplex containing other rows of the same matrix (cf. \cite{arora2012practical} for details). An advantage of this approach is that noise in the word-to-word co-occurrence matrix goes to zero as $M\rightarrow \infty$ no matter the document lengths, hence we can use Algorithm \ref{angle_simple} with "documents" being rows of the word-to-word co-occurrence matrix to learn anchor words nonparametrically and then run RecoverKL to obtain topic estimates. We will call this procedure cscRecoverKL.

\begin{algorithm}[ht]
\caption{Conic Scan-and-Cover (CoSAC)}
\label{angle_simple}
\begin{algorithmic}[1]
\REQUIRE document generating distributions $p_1,\ldots,p_M$, \\
angle threshold $\omega$, norm threshold $\mathcal{R}$
\ENSURE topics $\beta_1,\ldots,\beta_k$
\STATE $\hat C_p = \frac{1}{M}\sum_m p_m $ \COMMENT{find center};\qquad $\pt_m := p_m - \hat C_p$ for $m=1,\ldots,M$ \COMMENT{center the data}
\STATE $A_1 =\{1,\ldots,M\}$ \COMMENT{initialize active set};\qquad $k=1$ \COMMENT{initialize topic count}
\WHILE{$\exists m \in A_k: \|\pt_m\|_2 > \mathcal{R}$}
\STATE $v_k = \argmax\limits_{\pt_m:m \in A_k}\|\pt_m\|_2$ \COMMENT{find topic}
\STATE $\mathcal{S}_\omega(v_k) = \{m:d_{\cos}(\pt_m,v_k)<\omega\}$ \COMMENT{find cone of near documents}
\STATE $A_k = A_k \setminus \mathcal{S}_\omega(v_k)$ \COMMENT{update active set}
\STATE $\beta_k = v_k + \hat C_p$, $k=k+1$ \COMMENT{compute topic}
\ENDWHILE
\end{algorithmic}
\end{algorithm}

\begin{figure*}[ht]
\vskip -0.1in
\begin{center}
\centerline{\includegraphics[width=\textwidth]{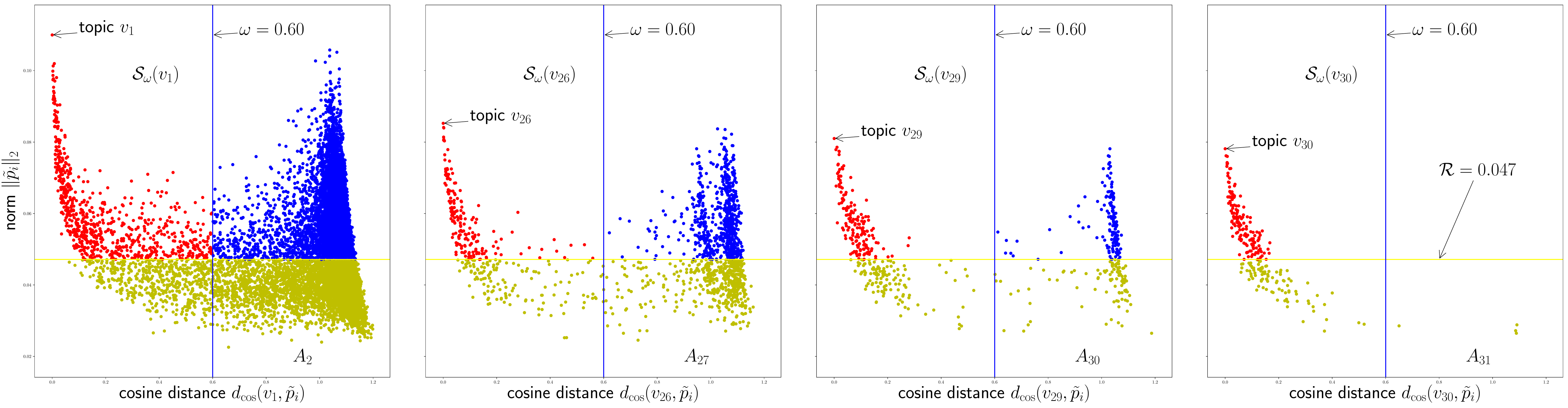}}
\caption{Iterations 1, 26, 29, 30 of the Algorithm \ref{angle_simple}. Red are the documents in the cone $\mathcal{S}_\omega(v_k)$; blue are the documents in the active set $A_{k+1}$ for next iteration. Yellow are  documents $\|\pt_m\|_2<\mathcal{R}$.}
\label{fig:algo_iter}
\end{center}
\vskip -0.2in
\end{figure*}

\begin{algorithm}[ht]
\caption{CoSAC for documents}
\label{angle_full}
\begin{algorithmic}[1]
\REQUIRE normalized documents $\bar w_1,\ldots,\bar w_M$, \\
angle threshold $\omega$, norm threshold $\mathcal{R}$, outlier threshold $\lambda$
\ENSURE topics $\beta_1,\ldots,\beta_k$
\STATE $\hat C_p = \frac{1}{M}\sum_m \bar w_m $ \COMMENT{find center};\qquad $\wt_m := \bar w_m - \hat C_p$ for $m=1,\ldots,M$ \COMMENT{center the data}
\STATE $A_1=\{1,\ldots,M\}$ \COMMENT{initialize active set}; \qquad $k=1$ \COMMENT{initialize topic count}
\WHILE{$\exists \ m \in A_k: \|\wt_m\|_2 > \mathcal{R}$}
\STATE $v_k = \argmax\limits_{\wt_m: m \in A_k}\|\wt_m\|_2$ \COMMENT{initialize direction}
\WHILE[mean-shifting]{$v_k$ not converged}
\STATE $\mathcal{S}_\omega(v_k) = \{m:d_{\cos}(\wt_m,v_k)<\omega\}$ \COMMENT{find cone of near documents}
\STATE $v_k = \sum_{m\in \mathcal{S}_\omega(v_k)}\wt_m/\card(\mathcal{S}_\omega(v_k))$ \COMMENT{update direction}
\ENDWHILE
\STATE $A_k = A_k \setminus \mathcal{S}_\omega(v_k)$ \COMMENT{update active set}\\
\algorithmicif{ $\ \card(\mathcal{S}_\omega(v_k))>\lambda M$\qquad}\algorithmicthen{ $\ k=k+1$ }\COMMENT{record topic direction}


\ENDWHILE
\STATE $v_1,\ldots,v_k = $ weighted spherical k-means $(v_1,\ldots,v_k,\wt_1,\ldots,\wt_M)$
\FOR{$l$ in $\{1,\ldots,k\}$}
\STATE $R_l := \max\limits_{m:m \in S^l(v_l)} \langle v_l, \wt_m \rangle$ \COMMENT{find topic length along direction $v_l$}
\STATE $\beta_l = R_l v_l + \hat C_p$ \COMMENT{compute topic}
\ENDFOR
\end{algorithmic}
\end{algorithm}

\section{Experimental results}
\label{experiment}
\subsection{Simulation experiments}
%

\begin{figure*}[t!]
\vskip -0.1in
\begin{center}
\centerline{\includegraphics[width=\textwidth]{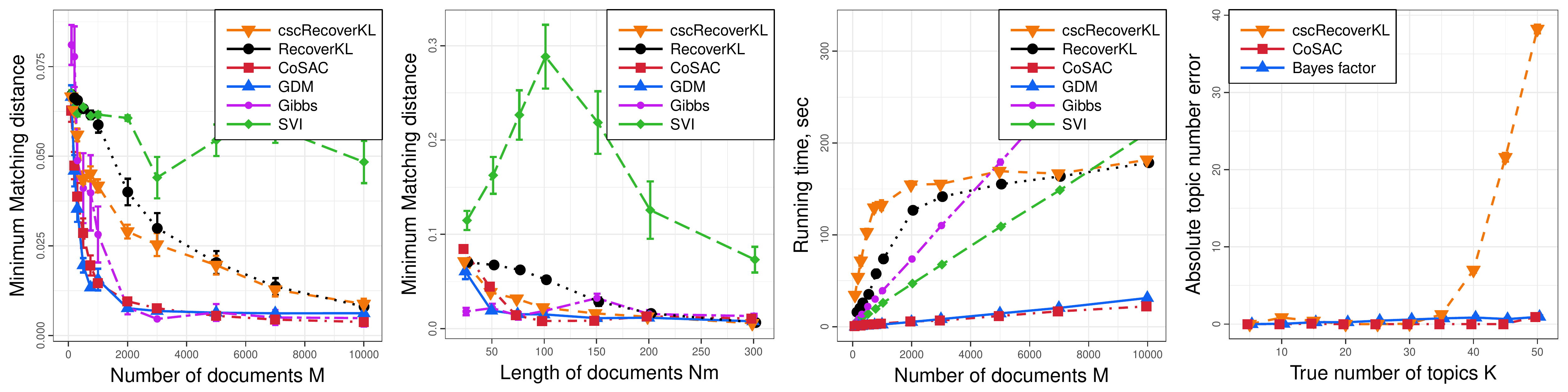}}
\caption{Minimum matching Euclidean distance for (a) varying corpora size, (b) varying length of documents; (c) Running times for varying corpora size; (d) Estimation of number of topics.}
\label{fig:all}
\end{center}
\vskip -0.2in
\end{figure*}

\begin{figure*}[t!]
\vskip -0.1in
\begin{center}
\centerline{\includegraphics[width=\textwidth]{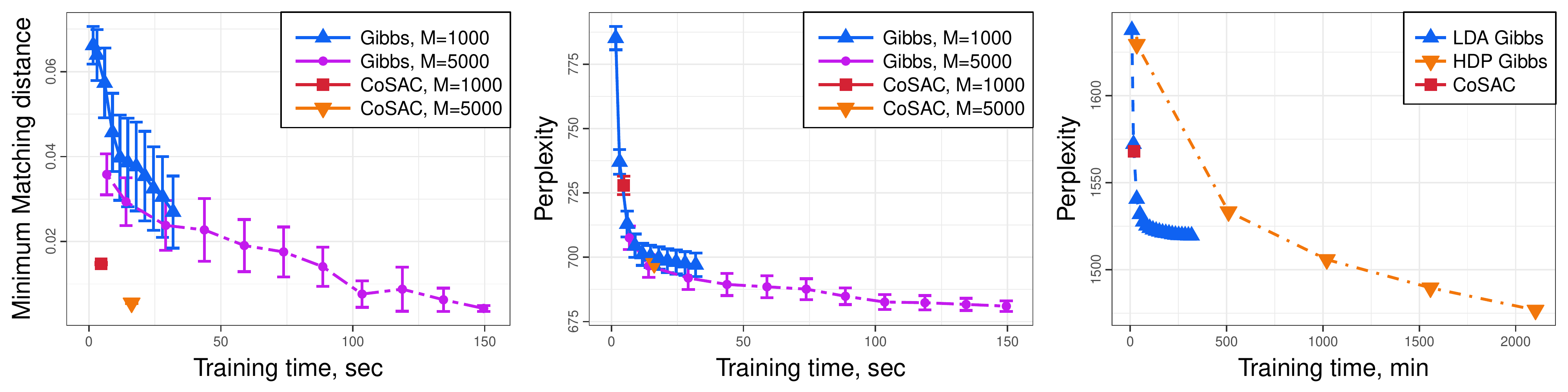}}
\caption{Gibbs sampler convergence analysis for (a) Minimum matching Euclidean distance for corpora sizes 1000 and 5000; (b) Perplexity for corpora sizes 1000 and 5000; (c) Perplexity for NYTimes data.}
\label{fig:time}
\end{center}
\vskip -0.2in
\end{figure*}

In the simulation studies we shall compare CoSAC (Algorithm \ref{angle_full}) and cscRecoverKL based on Algorithm \ref{angle_simple} both of which don't have access to the true $K$,  versus popular parametric topic modeling approaches (trained with true $K$): Stochastic Variational Inference (SVI), Collapsed Gibbs sampler, RecoverKL and GDM (more details in the \hyperref[supp:details]{Supplement}). The comparisons are done on the basis of minimum-matching Euclidean distance, which quantifies distance between topic simplices \citep{tang2014understanding}, and running times
(perplexity scores comparison is given in Fig. \ref{fig:supp_all}). Lastly we will demonstrate the ability of CoSAC to recover correct number of topics for a varying $K$.
\paragraph{Estimation of the LDA topics}
First we evaluate the ability of CoSAC and cscRecoverKL to estimate topics $\beta_1,\ldots,\beta_K$, fixing $K=15$. Fig. \ref{fig:all}(a) shows performance for the case of fewer $M\in [100,10000]$ but longer $N_m=500$ documents (e.g. scientific articles, novels, legal documents). CoSAC demonstrates performance comparable in accuracy to Gibbs sampler and GDM.

Next we consider larger corpora $M=30000$ of shorter $N_m \in [25,300]$ documents (e.g. news articles, social media posts). Fig. \ref{fig:all}(b) shows that this scenario is harder and CoSAC matches the performance of Gibbs sampler for $N_m \geq 75$. Indeed across both experiments CoSAC only made mistakes in terms of $K$ for the case of $N_m=25$, when it was underestimating on average by 4 topics and for $N_m=50$ when it was off by around 1, which explains 
the earlier observation. Experiments with varying $V$ and $\alpha$ are given in the \hyperref[supp:exp]{Supplement}.

It is worth noting that cscRecoverKL appears to be strictly better than its predecessor. This suggests that our procedure for selection of anchor words is more accurate in addition to being nonparametric.

\paragraph{Running time}
A notable advantage of the CoSAC algorithm is its speed.
In Fig. \ref{fig:all}(c) we see that Gibbs, SVI, GDM and CoSAC all have linear complexity growth in $M$, but the slopes are very different
and approximately are $IN_m$ for SVI and Gibbs (where $I$ is the number of iterations which has to be large enough for convergence), number of k-means iterations to converge for GDM and is of order $K$ for the CoSAC procedure making it the fastest algorithm of all under consideration.

Next we compare CoSAC to per iteration quality of the Gibbs sampler trained with 500 iterations for $M=1000$ and $M=5000$. Fig. \ref{fig:time}(b) shows that Gibbs sampler, when true $K$ is given, can achieve good perplexity score as fast as CoSAC and outperforms it as training continues, although Fig. \ref{fig:time}(a) suggests that much longer training time is needed for Gibbs sampler to achieve good topic estimates and small estimation variance.

\paragraph{Estimating number of topics}
Model selection in the LDA context is a quite challenging task and, to the best of our knowledge, there is no "go to" procedure. One of the possible approaches is based on refitting LDA with multiple choices of $K$ and using Bayes Factor for model selection \citep{griffiths2004finding}. Another option is to adopt the Hierarchical Dirichlet Process (HDP) model, but we should understand that it is not a procedure to estimate $K$ of the LDA model, but rather a particular prior on the number of topics, that assumes $K$ to grow with the data. A more recent suggestion is to slightly modify LDA and use Bayes moment matching \citep{hsu2016online}, but, as can be seen from Figure 2 of their paper, estimation variance is high and the method is not very accurate (we tried it with true $K=15$ and it took above 1 hour to fit and found 35 topics). Next we compare Bayes factor model selection versus CoSAC and cscRecoverKL
for $K\in[5,50]$.
Fig. \ref{fig:all}(d) shows that CoSAC consistently recovers \emph{exact} number of topics in a wide range.

We also observe that cscRecoverKL does not estimate $K$ well (underestimates) in the higher range. This is expected because cscRecoverKL finds the number of anchor words, \emph{not} topics. The former is decreasing when later is increasing. Attempting to fit RecoverKL with more topics than there are anchor words might lead to deteriorating performance and our modification can address this limitation of the RecoverKL method.

\subsection{Real data analysis}
In this section we demonstrate CoSAC algorithm for topic modeling on one of the standard bag of words datasets --- NYTimes news articles.  
After preprocessing we obtained $M \approx 130,000$ documents over $V=5320$ words. Bayes factor for the LDA selected the smallest model among $K\in[80,195]$, while CoSAC selected 159 topics. We think that disagreement between the two procedures is attributed to the misspecification of the LDA model when real data is in play, which affects Bayes factor, while CoSAC is largely based on the geometry of the topic simplex.

The results are summarized in Table \ref{nyt} --- CoSAC found 159 topics in less than 20min; cscRecoverKL estimated the number of anchor words in the data to be 27 leading to fewer topics. Fig. \ref{fig:time}(c) compares CoSAC perplexity score to per iteration test perplexity of the LDA (1000 iterations) and HDP (100 iterations) Gibbs samplers. Text files with top 20 words of all topics are \href{https://github.com/moonfolk/Geometric-Topic-Modeling/tree/master/nytimes_topics}{available on GitHub}. We note that CoSAC procedure recovered meaningful topics, contextually similar to LDA and HDP (e.g. elections, terrorist attacks, Enron scandal, etc.) and also recovered more specific topics about Mike Tyson, boxing and case of Timothy McVeigh which were present among HDP topics, but not LDA ones. We conclude that CoSAC is a practical procedure for topic modeling on large scale corpora able to find meaningful topics in a short amount of time.
\begin{table}[]
\caption{Modeling topics of NYTimes articles}
\label{nyt}
\centering
\begin{tabular}{lrrrr}
\toprule
{} &  $K$ &  Perplexity &  Coherence &  Time  \\
\midrule
cscRecoverKL   & 27 &   2603 &          -238 &            37 min \\
HDP Gibbs & $221\pm 5$ &   $1477\pm 1.6$ &          $-442\pm 1.7$ &            35 hours  \\
LDA Gibbs & 80 &   $1520\pm 1.5$ &          $-300\pm 0.7$ &            5.3 hours  \\
CoSAC & 159 &   1568 &          -322 &            19 min  \\
\bottomrule
\end{tabular}
\vskip -0.1in
\end{table}

\section{Discussion}
\label{discussion}
We have analyzed the problem of estimating topic simplex without assuming number of vertices (i.e., topics) to be known. We showed that it is possible to cover topic simplex using two types of geometric shapes, cones and a sphere, leading to a
class of Conic Scan-and-Cover algorithms. We then proposed several geometric correction techniques to account for the noisy data. Our procedure is accurate in recovering the true number of topics, while remaining practical due to its computational speed. We think that angular geometric approach might allow for fast and elegant solutions to other clustering problems, although as of now it does not immediately offer a unifying problem solving framework like MCMC or variational inference. An interesting direction in a geometric framework is related to building models based on geometric quantities such as distances and angles.

\subsubsection*{Acknowledgments}
This research is supported in part by grants NSF CAREER DMS-1351362, NSF CNS-1409303, a research
gift from Adobe Research and a Margaret and Herman Sokol Faculty Award.

\appendix
\section{Proofs of main theorems}
\label{supp:proofs}
We start by reminding the reader of our geometric setup. First, topic simplex $B:=\conv(\beta_1,\ldots,\beta_K)$ is centered at a point denoted by $C_p$.
Let $\Delta^{V-1}_0 := \{x\in\mathbb{R}^V: x + C_p \in \Delta^{V-1}\}$ --- centered probability simplex. Then,
write $b_k := \beta_k - C_p \in \Delta^{V-1}_0$ for $k=1,\ldots,K$ and $\pt_m := p_m - C_p \in \Delta^{V-1}_0$ for $m=1,\ldots,M$. Note that re-centering leaves corresponding barycentric coordinates $\theta_m\in\Delta^{K-1}$ unchanged.
Moreover, the extreme points of centered topic simplex $\Bt := \conv\{b_1,\ldots,b_K\}$ can now be represented by 
their directions $v_k\in\mathbb{R}^V$ and corresponding radii $R_k \in \mathbb{R}_+$ such that 
$b_k = R_k v_k$ for any $k = 1,\ldots, K$.
\subsection{Coverage of the topic simplex }
Suppose that $C_p$ is the incenter of the topic simplex $\Bt$, with $r$ being the inradius. Recall
that the incenter and inradius correspond to the maximum volume sphere inside $\Bt$. Let $a_{i,k}$ denote the distance between the $i^{th}$ and $k^{th}$ vertex of $\Bt$, with  $a_{min} \leq a_{i,k} \leq a_{max}$ for all $i,k$, and $R_{max},R_{min}$ such that $R_{min}\leq R_k := \|b_k\|_2 \leq R_{max} \ \forall \ k=1,\ldots,K$
\begin{propn}[\ref{delta_exist}]
For simplex $\Bt$ and
$\omega \in (\omega_1,\omega_2)$, where $\omega_1 = 1- r/ R_{max}$ 
and $\omega_2 = \max \{ (a_{max}^2)/(2R_{max}^2),\max\limits_{i,k=1,\ldots,K}(1-\cos(b_i,b_k)\}$,
the cone $\mathcal{S}_\omega(v)$ around any vertex direction $v$ of $\Bt$ contains exactly one vertex. 
Moreover, complete coverage holds: 
$\bigcup\limits_{k=1}^K \mathcal{S}_{\omega}(v_k) \supseteq \Bt$.
\end{propn}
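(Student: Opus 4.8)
The plan is to prove the two assertions separately: that each vertex cone isolates a single vertex (governed by the upper end $\omega_2$), and that the cones jointly cover $\Bt$ (governed by the lower end $\omega_1$). Throughout I use that each $v_k$ is a unit direction, since $b_k = R_k v_k$ with $R_k = \|b_k\|_2$.

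For the single-vertex claim, I would first observe that $\mathcal{S}_\omega(v_k)$ always contains its own vertex: because $b_k$ is parallel to $v_k$ we have $d_{\cos}(v_k, b_k) = 0 < \omega$. It then remains to exclude every other vertex $b_i$, $i \neq k$. Since $d_{\cos}(v_k, b_i) = 1 - \cos(b_i, b_k)$, the vertex $b_i$ lies outside the cone exactly when $\omega \le 1 - \cos(b_i, b_k)$, so I would keep $\omega$ below the pairwise angular distances. To phrase this in the stated quantities I would invoke the law of cosines $a_{i,k}^2 = R_i^2 + R_k^2 - 2 R_i R_k \cos(b_i, b_k)$, rewriting $1 - \cos(b_i, b_k) = \bigl(a_{i,k}^2 - (R_i - R_k)^2\bigr)/(2 R_i R_k)$ and bounding the radii by $R_{max}$; this is precisely the content of $\omega_2$, whose $a_{max}^2/(2R_{max}^2)$ term captures the (near-)equilateral regime while the explicit $\max_{i,k}(1-\cos(b_i,b_k))$ term covers the general one.

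For the coverage claim I would argue pointwise. Fix $p \in \Bt \setminus \{0\}$ and set $u = p/\|p\|_2$. Follow the ray $\{t u : t \ge 0\}$ until it leaves $\Bt$ at a boundary point $q = t^\star u$. Because $C_p$ is the incenter, the inball of radius $r$ about the origin lies in $\Bt$, so $r u \in \Bt$ and hence $t^\star \ge r$. The exit point $q$ lies on some facet of $\Bt$, so it is a convex combination $q = \sum_{k \in F} \theta_k b_k$ of that facet's vertices; pairing with $u$ gives $t^\star = \sum_{k \in F} \theta_k \langle u, b_k\rangle$, and as a convex combination this forces some vertex to satisfy $\langle u, b_k\rangle \ge t^\star \ge r$. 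For that vertex, $\cos(v_k, p) = \langle b_k, u\rangle / R_k \ge r/R_k \ge r/R_{max}$, i.e. $d_{\cos}(v_k, p) \le 1 - r/R_{max} = \omega_1 < \omega$, so $p \in \mathcal{S}_\omega(v_k)$. This yields $\bigcup_{k=1}^K \mathcal{S}_\omega(v_k) \supseteq \Bt$.

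The main obstacle I anticipate is not either argument in isolation but reconciling them: the upper bound forces narrow cones while the lower bound forces wide ones, so the interval $(\omega_1, \omega_2)$ can be empty without extra structure. Showing it is nonempty is where the angular-separation hypothesis (all pairwise angles at least $\pi/2$, so $1 - \cos(b_i,b_k) \ge 1$) or the equilateral geometry enters, and this is the delicate law-of-cosines bookkeeping for general, non-equilateral simplices. A minor additional point is that $d_{\cos}$ is undefined at the origin, so the coverage statement should be read on $\Bt \setminus \{0\}$, with the center itself absorbed by the spherical ball of Proposition \ref{r_delta}.
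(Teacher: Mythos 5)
Your proposal is correct, and it splits the claim the same way the paper does: the upper end $\omega_2$ controls vertex isolation via the pairwise angular distances $1-\cos(b_i,b_k)$ rewritten through the law of cosines, and the lower end $\omega_1=1-r/R_{max}$ controls coverage via the inradius. The isolation half is essentially identical to the paper's argument. Where you genuinely diverge is the coverage half: the paper decomposes $\Bt$ into $K$ cones spanned by the segments from the incenter to the facial incenters of the facets incident to each vertex, and then bounds the opening angle of each such cone by $\arccos(r/R_k)$; you instead argue pointwise, following the ray through an arbitrary $p\in\Bt$ to its exit point on a facet, writing that exit point as a convex combination of the facet's vertices, and extracting a vertex $b_k$ with $\langle u,b_k\rangle\geq t^\star\geq r$, whence $\cos(v_k,p)\geq r/R_{max}$. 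Your route is more elementary and more rigorous than the paper's rather terse facial-incenter decomposition (which leaves the containment $\mathcal{S}(v_k)\subseteq\mathcal{S}_{\omega}(v_k)$ essentially asserted), at the cost of not exhibiting an explicit partition of the simplex. You also correctly identify the two caveats the paper itself only partially acknowledges: the interval $(\omega_1,\omega_2)$ can be empty without angular separation or near-equilateral structure (the paper's proof notes the restriction $2R_{max}^2\leq a_{min}^2$ for nonemptiness), and the center point itself, where $d_{\cos}$ is undefined, must be absorbed by the ball of the subsequent proposition.
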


\begin{proof}
Let $\omega_0= \frac{a_{min}^2}{2R_{max}^2}$. Then, for any $k \in \{1,\dots, K\} $, for any $\omega \leq \omega _0$, $\mathcal{S}_{\omega}(v_k)$ does not contain any other vertices. This can be explained as follows. Fix $k$, and choose $i \in \{1,\dots,K \}\neq k$. Define $\phi_{i,k}$ as the angle at $C_p$ made by the side connecting the vertex $i$ and vertex $k$. Then from the cosine law for triangles, we have
\begin{equation}
\cos(\phi_{i,k}) =\frac{R_i^2 +R_k^2 - a_{i,k}^2}{2R_iR_k}.\nonumber
\end{equation}
Now, for any $\phi \leq \underset{i,k}{\operatorname{min}} \, \phi_{i,k}$, with $\omega_{\phi}= 1- \cos(\phi)$, the cone  $\mathcal{S}_{\omega_{\phi}}(v_k)$ does not cover any vertex other than vertex $k$, for any $k$. Now $\phi_1= \underset{i,k}{\operatorname{min}} \phi_{i,k}$ satisfies
\begin{eqnarray}
1- \cos (\phi_1) \leq \frac{a_{min}^2}{2R_{max}^2} - \frac{(R_{max}-R_{min})^2}{2R_{max}R_{min}} \leq \frac{a_{min}^2}{2R_{max}^2}. \nonumber
\end{eqnarray}
from which we obtain the upper bound for $\omega$. For the lower bound, consider for vertex $k$, $\mathcal{S}(v_k)$ the cone connecting the incenter to facial incenters of facets containing vertex $k$. Then $\bigcup\limits_{k=1}^K \mathcal{S}(v_k) \supseteq \Bt$.
Now for each $k$,  $\mathcal{S}(v_k) \subseteq \mathcal{S}_{\omega_2}(v_k) $, where $\omega_2=1-\cos(\phi_2)$, with $\phi_2$ satisfying $\cos(\phi_2)\leq \underset{k \in \{1,\dots, K\}}{\operatorname{min}}\frac {r}{R_k}$.
From this we get the lower bound. The restriction $2R_{max}^2 \leq a_{min}^2$ is needed to ensure that the set $\big\{ \omega : 1-(\frac{r}{R_{max}})\leq \omega \leq (\frac{a_{max}^2}{2R_{max}^2}) \big\}$ is non-empty.
\end{proof}

\begin{propn}[\ref{r_delta}]
Let $\mathscr{B}(C_p,\mathcal{R})=\{ \tilde{p} \in \mathbb{R}^{V} |  \|\tilde{p} -C_p\|_2 \leq \mathcal{R} \}$, $\mathcal{R} > 0$; $\omega_1,\omega_2$ given in Prop. \ref{delta_exist}, and 
\begin{equation}
\label{supp:w3}
\omega_3 := 1- \min\bigg\{\underset{i,k}\min\left( \frac{R_k \sin^2(b_i,b_k)}{\mathcal{R}} + \cos(b_i,b_k)\sqrt{1-\frac{R_k^2 \sin^2(b_i,b_j)}{\mathcal{R}^2}}\right),1\bigg\},
\end{equation}
then we have 
$\bigcup\limits_{k=1}^K \mathcal{S}_{\omega}(v_k) \cup {\mathscr{B}(C_p,\mathcal{R})} \supseteq \Bt$ whenever $\omega \in (\min\{\omega_1, \omega_3\},\omega_2)$.
\end{propn}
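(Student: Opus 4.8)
The plan is to treat the ball as covering the central region of $\Bt$ and to show that the cones then only need to cover the outer shell, an easier task that permits a smaller opening angle. Concretely, I would split an arbitrary $\tilde p \in \Bt$ into two cases according to $\|\tilde p\|_2$. If $\|\tilde p\|_2 \le \mathcal{R}$ then $\tilde p \in \mathscr{B}(C_p,\mathcal{R})$ and we are done, so the whole content of the proposition is to show that every point with $\|\tilde p\|_2 > \mathcal{R}$ lies in some cone $\mathcal{S}_\omega(v_k)$ as soon as $\omega > \min\{\omega_1,\omega_3\}$. The upper endpoint $\omega_2$ and the one-vertex-per-cone behaviour are inherited verbatim from Proposition \ref{delta_exist}, since enlarging the covered set with a ball can only help coverage; the genuine task is to relax the lower endpoint from $\omega_1$ to $\min\{\omega_1,\omega_3\}$.

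Next I would reduce the coverage check to the edges of $\Bt$. A point $\tilde p$ is covered by cone $k$ iff $\cos(v_k,\tilde p) > 1-\omega$, a condition depending only on the direction of $\tilde p$, whereas $\|\tilde p\|_2$ is maximized, for a fixed direction, on the boundary $\partial\Bt$. I would argue that the supremum of $\|\tilde p\|_2$ over the uncovered directions (those with $\angle(v_k,\tilde p) \ge \arccos(1-\omega)$ for all $k$) is attained on the $1$-skeleton of $\Bt$: within any facet the radial function grows as one approaches the lower-dimensional faces, and the uncovered region is exactly the part of the facet pinched away from all the vertex cones. Hence it suffices to verify that on every edge $[b_i,b_k]$ the sub-arc lying outside both end cones $\mathcal{S}_\omega(v_i)$ and $\mathcal{S}_\omega(v_k)$ is contained in the ball.

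The heart of the computation is then purely planar, inside the $2$-plane through $C_p$, $b_i$ and $b_k$. Writing $\gamma=\angle(b_i,b_k)$, I would locate the point $\tilde p_*$ where the edge $[b_i,b_k]$ meets the sphere of radius $\mathcal{R}$ on the side nearer $b_k$, form the triangle with vertices $C_p$, $b_k$, $\tilde p_*$ (with $|C_p b_k|=R_k$ and $|C_p \tilde p_*|=\mathcal{R}$), and apply the law of sines to read off $\angle(v_k,\tilde p_*)$. Setting $\sin\alpha = R_k\sin\gamma/\mathcal{R}$, this angle satisfies $\cos\angle(v_k,\tilde p_*) = \sin\gamma\sin\alpha + \cos\gamma\cos\alpha = \cos(\gamma-\alpha)$, which is exactly the inner expression $\frac{R_k\sin^2(b_i,b_k)}{\mathcal{R}} + \cos(b_i,b_k)\sqrt{1-R_k^2\sin^2(b_i,b_k)/\mathcal{R}^2}$ defining $\omega_3$. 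Because $\|\tilde p\|_2$ varies monotonically with the angle along the edge once past the perpendicular foot, the entire outer sub-arc on the $b_k$ side has angle to $v_k$ at most $\angle(v_k,\tilde p_*)$; so taking $\omega$ above $1-\cos(\gamma-\alpha)$ forces that sub-arc into the cone and the residual middle of the edge into the ball. Minimizing the inner expression over all pairs $(i,k)$ (with the truncation at $1$ handling edges that never leave the ball) yields $\omega_3$ and covers every edge simultaneously.

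Finally I would assemble the two regimes. If $\omega_1 \le \omega_3$ the admissible interval is $(\omega_1,\omega_2)$ and Proposition \ref{delta_exist} already gives conical coverage, the ball being redundant; if $\omega_3 < \omega_1$, then for $\omega \in (\omega_3,\omega_1]$ the edge argument shows the outer shell is covered by cones and the core by the ball, while for $\omega \in (\omega_1,\omega_2)$ Proposition \ref{delta_exist} again applies. In every case $\bigcup_k \mathcal{S}_\omega(v_k) \cup \mathscr{B}(C_p,\mathcal{R}) \supseteq \Bt$. The main obstacle I anticipate is the reduction-to-edges step: rigorously showing the farthest uncovered point sits on the $1$-skeleton (rather than in a facet interior) needs a careful monotonicity argument, and reconciling the clean closed form for $\omega_3$ — which depends only on $R_k$, $\gamma$ and $\mathcal{R}$ — with the genuine edge geometry, which a priori also involves $R_i$ and the edge length $a_{i,k}$, will require either an optimization over those nuisance parameters or an explicit worst-case bound that effectively pins the angle at $b_k$ to $\pi-\gamma$.
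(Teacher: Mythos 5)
Your overall strategy coincides with the paper's: both arguments reduce the coverage check to the edges of $\Bt$, locate the point $\tilde p_*$ where the sphere of radius $\mathcal{R}$ meets the edge joining $b_i$ and $b_k$ on the side of $b_k$, and use the law of sines in the triangle with vertices $C_p$, $b_k$, $\tilde p_*$ to read off the half-angle $\phi_{i,k}=\angle(v_k,\tilde p_*)$ that the cone at $v_k$ must exceed, then take the worst case over $(i,k)$. Your write-up is in fact more explicit than the paper's about the two points it glosses over (the reduction to the $1$-skeleton and the assembly of the regimes $\omega_1\le\omega_3$ versus $\omega_3<\omega_1$), and your monotonicity argument along an edge is sound.

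However, there is one concrete error in the planar computation, and it is exactly the issue you flagged at the end. In the triangle $C_p\,b_k\,\tilde p_*$ the law of sines gives $\sin(\angle\tilde p_*)=R_k\sin(\angle_{b_k})/\mathcal{R}$, where $\angle_{b_k}$ is the interior angle of the simplex at the vertex $b_k$, i.e.\ the angle between $C_p-b_k$ and $b_i-b_k$ (the quantity $\angle(b_k,b_k-b_i)$ that reappears as $d$ in Prop.~\ref{n_theorem}); it is \emph{not} the central angle $\gamma$ at $C_p$ between $b_i$ and $b_k$, which does not occur in that triangle at all. Substituting $\gamma$ breaks the argument: for an equilateral triangle with $R_k=1$ and $\mathcal{R}=r=1/2$ one has $\gamma=2\pi/3$ and $R_k\sin\gamma/\mathcal{R}=\sqrt{3}>1$, so your $\alpha$ is undefined, whereas the vertex angle is $\pi/6$ and yields the correct $\phi_{i,k}=\pi/3$. (One must also take the obtuse branch for $\angle\tilde p_*$ at the near intersection point, which gives $\phi_{i,k}=\alpha-\angle_{b_k}$ with $\alpha$ the acute arcsine.) In fairness, the printed formula for $\omega_3$ invites this misreading, since $\cos(b_i,b_k)$ was defined as the cosine of the central angle; but the supplement's figure and the subsequent use of $d\le\min_{i,k}\angle(b_k,b_k-b_i)$ make clear the vertex angle is intended. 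With that substitution your derivation matches the paper's, and the ``nuisance parameters'' $R_i$ and $a_{i,k}$ you worried about disappear: the vertex angle is the single quantity the law of sines needs.
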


\begin{figure}
\centering 
 \begin{tikzpicture}
    \coordinate (A) at (1,3);
    \coordinate (B) at ($(A)+(2:3)$);
    \coordinate (C) at ($(A)+(85:5)$);
    \draw (A) node[left]{$A$} -- node[below]{$\mathcal{R}$} (B) node[right]{$B$} -- (C)node[above]{$C$}node[midway,above]{$R_k$} -- (A);

    \tikzAngleOfLine(B)(C){\AngleStart}
    \tikzAngleOfLine(B)(A){\AngleEnd}
    \draw[red,<->] (B)+(\AngleStart:2cm) arc (\AngleStart:\AngleEnd:2 cm);
    \node[circle] at ($(B)+({(\AngleStart + \AngleEnd)/2}:1 cm)$) {$\arccos(1-\omega_R)$};
    \tikzAngleOfLine(C)(A){\AngleStart}
    \tikzAngleOfLine(C)(B){\AngleEnd}
    \draw[red,<->] (C)+(\AngleStart:2cm) arc (\AngleStart:\AngleEnd:2 cm);
    \node[circle] at ($(C)+({(\AngleStart+\AngleEnd)/2}:1.6 cm)$) { $\angle(b_i,b_k)$};
\end{tikzpicture}
\caption{ $C$ :  $k^{th}$ vertex point, $A$ : point where the adjacent side to the vertex has been cut off by the sphere, $R_k$: distance to $k^{th}$ vertex from incenter, $\mathcal{R}$ : radius of sphere, $B$ : incenter}
\label{fig:R_delta}
\end{figure}
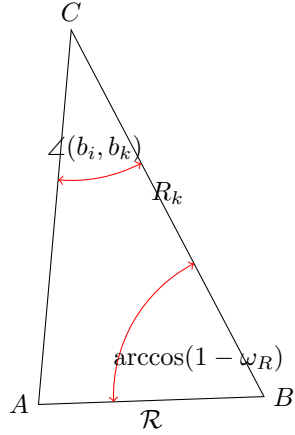

\begin{proof}
Let $\phi_{i,k}=\arccos(1-\omega_{i,k})$ be the angle formed by the line joining the $k^{th}$ vertex to the incenter $C_p$ and the radial vector from incenter to the point where the sphere cuts the edge connecting $i$ and $k$ (segment $AB$ on Fig. \ref{fig:R_delta}). From the sine law for a triangle we have
\begin{eqnarray}
\cos(\phi_{i,k}) +\cot(b_i,b_k) \sin(\phi_{i,k}) - \frac{R_k}{\mathcal{R}}=0.
\end{eqnarray}
Solving for $\phi_{i,k}$ we have $\cos(\phi_{i,k}) =\left( \frac{R_k \sin^2(b_i,b_k)}{\mathcal{R}} + \cos(b_i,b_k)\sqrt{1-\frac{R_k^2 \sin^2(b_i,b_k)}{\mathcal{R}^2}}\right) $. Now, since we must choose the largest such $\phi$ over all $i$ and $k$, the bound follows immediately. Notice that as $\mathcal{R} \rightarrow R_{max}$, the value of $\left( \frac{R_k \sin^2(b_i,b_k)}{\mathcal{R}} + \cos(b_i,b_k)\sqrt{1-\frac{R_k^2 \sin^2(b_i,b_k)}{\mathcal{R}^2}}\right)  \rightarrow 1$, whereas $\frac{r}{R_{max}} < 1$ strictly. Thus, as $\mathcal{R}$ increases the lower bound in this limiting scenario is  dominated by  $1- \underset{i,k}{\operatorname{min}}\left( \frac{R_k \sin^2(b_i,b_k)}{\mathcal{R}} + \cos(b_i,b_k)\sqrt{1-\frac{R_k^2 \sin^2(b_i,b_k)}{\mathcal{R}^2}}\right)$, thereby obtaining an improvement in the bound from Proposition \ref{delta_exist}.
\end{proof}

\begin{propn}[\ref{n_eq}]
The cone $S_\omega(v_1)$ whose axis is a topic direction $v_1$ has mass
\begin{eqnarray}
\left.\begin{aligned}
& \P(\mathcal{S}_\omega(v_1)) > \P(\Lambda_{c}(b_1)) = \frac{\int_{1-c}^1\theta_1^{\alpha_1-1}(1- \theta_1)^{\sum_{i\neq 1}\alpha_i -1}\mathrm{d}\theta_1}{\int_{0}^1\theta_1^{\alpha_1-1}(1- \theta_1)^{\sum_{i\neq 1}\alpha_i -1}\mathrm{d}\theta_1} = \\
& \frac{c^{\sum_{i \neq 1}\alpha_i}(1-c)^{\alpha_1}\Gamma(\sum_{i=1}^K \alpha_i)}{(\sum_{i \neq 1} \alpha_i) \Gamma(\alpha_1) \Gamma(\sum_{i \neq 1} \alpha_i)}
\biggr [1 + \frac{c\sum_{i=1}^K \alpha_i }{\sum_{i \neq 1} \alpha_i + 1}  + \frac{c^2(\sum_{i=1}^K \alpha_i)(\sum_{i=1}^K \alpha_i +1) }{(\sum_{i \neq 1} \alpha_i + 1)(\sum_{i \neq 1} \alpha_i + 2)} + \cdots  
\biggr ],
\end{aligned}\right.
\end{eqnarray}
where $\Lambda_{c}(b_1)$ is the simplicial cap of $\mathcal{S}_\omega(v_1)$  which is composed
of vertex $b_1$ and a base parallel to the corresponding base of $\Bt$ and cutting adjacent edges of $\Bt$ in the ratio $c:(1-c)$. 
\end{propn}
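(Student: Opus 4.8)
The plan is to split the statement into a combinatorial identification of the cap, a distributional reduction to a Beta tail, and an analytic series expansion. First I would pin down the simplicial cap in barycentric terms: a point $p=\sum_k \theta_k b_k \in \Bt$ lies in $\Lambda_c(b_1)$ --- the sub-simplex cut off at $b_1$ by a hyperplane parallel to the opposite face $\conv\{b_2,\dots,b_K\}$ that meets each incident edge $b_1b_i$ in the ratio $c:(1-c)$ measured from $b_1$ --- if and only if its first barycentric coordinate satisfies $\theta_1 \geq 1-c$. Indeed, the cut point on the edge $b_1b_i$ is $(1-c)b_1 + c\,b_i$, which has $\theta_1 = 1-c$, and the parallel base is exactly the level set $\{\theta_1 = 1-c\}$. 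This identifies $\Lambda_c(b_1)$ with the region $\{\theta_1 \geq 1-c\}$ inside $\Bt$.

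Next I would establish $\P(\mathcal{S}_\omega(v_1)) > \P(\Lambda_c(b_1))$ by showing that the inscribed cap sits inside the cone. Here $\Lambda_c(b_1)$ is taken to be the largest simplex similar to $\Bt$ sharing the vertex $b_1$ whose base corner points (the cuts on the edges $b_1b_i$) lie on the conical boundary $d_{\cos}(v_1,\cdot)=\omega$; this is precisely the $c$--$\omega$ correspondence made explicit in Prop. \ref{n_theorem}, obtained from the planar triangle with vertices $C_p$, $b_1$ and a cut point via the law of sines. Since the flat cap is contained in the rounder cone, and the cone strictly contains it (capturing additional regions of positive density), the strict inequality follows.

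With the cap identified as $\{\theta_1 \ge 1-c\}$, the distributional reduction is immediate: because $\theta_m \thicksim \Dir_K(\alpha)$, the aggregation property of the Dirichlet gives the marginal $\theta_1 \thicksim \mathrm{Beta}(\alpha_1,\ \sum_{i\neq 1}\alpha_i)$, and the cap event constrains only $\theta_1$, so $\P(\Lambda_c(b_1)) = \P(\theta_1 \ge 1-c)$ equals the displayed ratio of Beta integrals. To obtain the bracketed series I would substitute $s = 1-\theta_1$ to rewrite the numerator as the incomplete Beta integral $\int_0^c s^{\,b-1}(1-s)^{\,a-1}\,\mathrm{d}s$ with $a = \alpha_1$ and $b = \sum_{i\neq 1}\alpha_i$, and then expand it through the standard hypergeometric representation $\int_0^c s^{b-1}(1-s)^{a-1}\mathrm{d}s = \frac{c^b(1-c)^a}{b}\,{}_2F_1(1,a+b;b+1;c)$ (equivalently by repeated integration by parts, or via the Euler transformation of the elementary series for the incomplete Beta function). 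Dividing by $B(\alpha_1,\sum_{i\neq1}\alpha_i)=\Gamma(\alpha_1)\Gamma(\sum_{i\neq1}\alpha_i)/\Gamma(\sum_i\alpha_i)$ and reading off the Pochhammer coefficients $(a+b)_n/(b+1)_n$ reproduces the stated series term by term.

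The hard part will be the geometric containment of the second step: the cap is defined by a flat affine cut while the cone is defined angularly, so one must verify that the edge cut points are genuinely the points of the cap farthest in angle from $v_1$ --- so that their lying on the cone boundary forces the entire cap into the cone --- and make the underlying trigonometric $c$--$\omega$ relation precise. The distributional reduction and the hypergeometric expansion are then routine.
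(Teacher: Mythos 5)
Your proposal is correct and follows essentially the same route the paper relies on: identify the cap with the barycentric half-space $\{\theta_1 \ge 1-c\}$, reduce to the tail of the marginal $\mathrm{Beta}(\alpha_1,\sum_{i\neq 1}\alpha_i)$ by Dirichlet aggregation, and expand the incomplete Beta integral via its hypergeometric series. In fact the paper gives no written proof beyond a citation for the truncated-Beta calculation, so your write-up --- including the observation that the containment of the cap in the cone reduces, by convexity of the cone, to checking the edge cut points --- supplies strictly more detail than the original.
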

The truncated beta probability calculations in Proposition \ref{n_eq} can be found in \cite{incompletebeta}.

\begin{propn}[\ref{n_theorem}]
For $\lambda\in (0,1)$, let  $c_\lambda$ be such that $\lambda =\underset{k} \min\; \P(\Lambda_{c_{\lambda}}(b_k))$ and let $\omega_\lambda$ be such that
\begin{equation}
\label{supp:c_lambda}
c_\lambda=\left(\left(2\sqrt{1- \frac{r^2}{R_{max}^2}}\right) \  \left( \sin(d)\cot(\arccos(1-\omega_\lambda)) + \cos(d)  \right) \right)^{-1},
\end{equation}
where angle $d \leq \underset{i,k}{\operatorname{min}}\, \angle(b_k, b_k - b_i)$. Then, as long as
\begin{equation}
\label{supp:omega_lambda}
\omega\in\left(\omega_\lambda, \max\left(\frac{a_{max}^2}{2R_{max}^2},\max\limits_{i,k=1,\ldots,K}(1-\cos(b_i,b_k)\right)\right),
\end{equation} 
the bound $\P(\mathcal{S}_{\omega}(v_k)) \geq \lambda\text{ holds for all }k=1,\ldots, K$.
\end{propn}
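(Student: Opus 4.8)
The plan is to reduce the claim to the inscribed-cap mass bound of Proposition \ref{n_eq} together with a single planar trigonometric identity relating the cone half-angle to the cut ratio $c$, and then to close the argument with two monotonicity observations. First I would recall that for a cone $\mathcal{S}_\omega(v_k)$ whose axis is the vertex direction $v_k$, Proposition \ref{n_eq} already gives $\P(\mathcal{S}_\omega(v_k)) > \P(\Lambda_c(b_k))$, where $\Lambda_c(b_k)$ is the simplicial cap inscribed in that cone. Thus it suffices to (i) make the dependence $c=c(\omega)$ explicit and (ii) show that for $\omega>\omega_\lambda$ the inscribed cap has $c(\omega)\ge c_\lambda$. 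Indeed, since the cap mass is increasing in $c$, this yields $\P(\mathcal{S}_\omega(v_k))>\P(\Lambda_{c(\omega)}(b_k))\ge \P(\Lambda_{c_\lambda}(b_k))\ge \min_k\P(\Lambda_{c_\lambda}(b_k))=\lambda$ for every $k$. The upper endpoint of the admissible interval is exactly $\omega_2$ from Proposition \ref{delta_exist}; keeping $\omega<\omega_2$ guarantees that each cone still contains a single vertex, so that the single-vertex geometry underlying Proposition \ref{n_eq} and the cap construction remains valid.

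Second, I would derive $c(\omega)$ geometrically. Working in the two-dimensional plane spanned by $C_p$, $b_k$ and a neighbor $b_i$, I place $C_p$ at the origin with $b_k=R_k v_k$ and let $\beta_i:=\angle(b_k,b_k-b_i)$ be the interior angle at $b_k$ of the triangle $C_p b_k b_i$. The base vertex of the cap on the edge $b_ib_k$ is $(1-c)b_k+cb_i$, lying at distance $c\,a_{i,k}$ from $b_k$; computing the angle it subtends at $C_p$ and setting it equal to the cone half-angle $\psi=\arccos(1-\omega)$ yields
\begin{equation}
c\,a_{i,k}\bigl(\sin\beta_i\cot\psi+\cos\beta_i\bigr)=R_k,\nonumber
\end{equation}
with the inscribed cap obtained by taking the binding (smallest) such $c$ over all neighbors $i$. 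To reach the stated closed form I would bound the geometric quantities conservatively: the perpendicular from the incenter $C_p$ onto the edge $b_ib_k$ has length at least $r$, so each half-edge is at most $\sqrt{R_{max}^2-r^2}$ and hence $a_{i,k}\le 2R_{max}\sqrt{1-r^2/R_{max}^2}$; and replacing $\beta_i$ by the lower bound $d\le\min_{i,k}\angle(b_k,b_k-b_i)$ only enlarges the denominator, because $\beta\mapsto\sin\beta\cot\psi+\cos\beta$ is monotone on the relevant angular range. Substituting these bounds into the identity above produces exactly Eq. \eqref{c_lambda}, the substitution $a_{i,k}/R_k\leftrightarrow 2\sqrt{1-r^2/R_{max}^2}$ being tight for an equilateral $\Bt$, where all the $R_k$ coincide.

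Finally, I would assemble the pieces. Since $\cot\psi$ is decreasing in $\psi$ and $\psi=\arccos(1-\omega)$ is increasing in $\omega$, the identity shows that $c(\omega)$ is increasing in $\omega$; combined with the conservative closed form this gives $c(\omega)\ge c_\lambda$ for every $\omega>\omega_\lambda$. The cap mass $\P(\Lambda_c(b_k))$ is manifestly increasing in $c$, so $\P(\Lambda_{c(\omega)}(b_k))\ge\P(\Lambda_{c_\lambda}(b_k))\ge\lambda$ for each $k$, and Proposition \ref{n_eq} upgrades this to $\P(\mathcal{S}_\omega(v_k))\ge\lambda$ for all $k$, which is the claim.

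The main obstacle is the second step: carrying out the planar trigonometry for the cut point correctly and, more delicately, checking that the two conservative replacements ($a_{i,k}/R_k\to 2\sqrt{1-r^2/R_{max}^2}$ and $\beta_i\to d$) both push the bound in the safe direction. The monotonicity of $\beta\mapsto\sin\beta\cot\psi+\cos\beta$ holds only on a restricted angular range, so some care — or the equilateral idealization used elsewhere in the paper — is needed to make Eq. \eqref{c_lambda} an honest sufficient condition rather than merely an approximation.
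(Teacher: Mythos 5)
Your proposal is correct and follows essentially the same route as the paper's own proof: the same sine-law identity $c\,a_{i,k}(\sin(\cdot)\cot\psi+\cos(\cdot))=R_k$ relating the cut ratio to the cone half-angle, the same conservative bound $a_{i,k}/R_k\le 2\sqrt{1-r^2/R_{max}^2}$, the same monotonicity considerations in the vertex angle, and the final appeal to Proposition \ref{n_eq} to pass from the cap mass to the cone mass. Your write-up is in fact somewhat more explicit than the paper's about the reduction to $c(\omega)\ge c_\lambda$ and about where the equilateral idealization is needed, but the underlying argument is the same.
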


\begin{proof}
Consider Figure \ref{fig:R_delta}, with length of $AC = a_{i,k}c$, where $c$ is the proportion in which the cone cuts $AC$, the edge joining vertex $i$ and vertex $k$. Now, from the sine law of a triangle,
\begin{eqnarray}\label{c_n dependence}
\frac{R_k}{a_{i,k}c}=\sin(b_i,b_k) \cot{\phi_{i,k}} + \cos(b_i,b_k)
\end{eqnarray}
where $\phi_{i,k}$ is as defined in the proof of Proposition \ref{r_delta}. Now $\frac{a_{i,k}}{R_k}\leq \frac{2(\sqrt{R_{max}^2-r^2})}{R_{max}}$. The  choice of $\phi_{\lambda}= \cos{\omega_{\lambda}}$ satisfies
\begin{eqnarray}
c_\lambda \geq \frac{1}{2\sqrt{1-\frac{r^2}{R_{max}^2}}} \underset{i,k}{\operatorname{min}}\frac{1}{\sin(b_i,b_k) \cot{\phi_\lambda} + \cos(b_i,b_k)}
\end{eqnarray}
therefore proves the theorem. Since, $\phi_{\lambda}\leq \frac{\pi}{2} - \angle(b_i,b_k)$, for all $i,k$,  the function $\sin(b_i,b_k) \cot{\phi_{\lambda}} + \cos(b_i,b_k)$ is increasing as  the angle between $b_i$ and $b_k$ increases, as can be checked  for maxima by the first derivative rule. Using the cosine law,
\begin{eqnarray}
\cos(b_i,b_k)=\frac{-R_i^2 +R_k^2 + a_{i,k}^2}{2a_{i,k}R_k}.
\end{eqnarray}
Minimizing this quantity with respect to $i$ and $k$ we get the result.
\end{proof}

\subsection{Consistency of the Conic Scan-and-Cover algorithm}
Under the LDA setup (as presented  in  Section \ref{back}), recall that $a_{i,k}$ is the length of the edge connecting the $i^{th}$ and $k^{th}$ vertex, i.e., $\|\beta_i -\beta_k\|_2 = a_{i,k}$, where $\| \cdot \|_2$ is the $\ell_2$ norm. Let $\mathscr{B}(\cdot,\mathcal{\epsilon})$ denote an $\epsilon$-ball in $\ell_2$-norm.
Then the following result states that with high probability there exists a document in a neighborhood of every vertex.

\begin{lem}\label{single document}
Let $p_m := \sum_k \beta_k \theta_{mk}$ for $m=1,\ldots,M$ as before. Then for any $i$ and any  $0 < \epsilon < \underset{k\neq i}{\operatorname{max}} \, a_{i,k}$, 
\begin{eqnarray}
\P( p_m \notin \mathscr{B}(\beta_i,\mathcal{\epsilon}) \; \forall \; m \in \{1,\dots,M\}) \leq \left( \frac{\int_{0}^{1- (\epsilon/ \underset{k\neq i}{\operatorname{max}} a_{i,k})}\theta_i^{\alpha_i-1}(1- \theta_i)^{\sum_{j\neq i}\alpha_j -1}\mathrm{d}\theta_i}{\int_{0}^1\theta_i^{\alpha_i-1}(1- \theta_i)^{\sum_{j\neq 1}\alpha_j -1}\mathrm{d}\theta_i} \right)^{M}.
\end{eqnarray}
\end{lem}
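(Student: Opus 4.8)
The plan is to exploit the independence across documents and then reduce the per-document event to a one-dimensional Beta tail bound via a triangle inequality.

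First I would observe that the proportions $\theta_1,\ldots,\theta_M$ are drawn i.i.d.\ from $\Dir_K(\alpha)$ and that each $p_m=\sum_k \beta_k\theta_{mk}$ is a deterministic function of $\theta_m$. Hence the events $\{p_m\notin\mathscr{B}(\beta_i,\epsilon)\}$ are i.i.d., and
\begin{equation*}
\P\left(p_m\notin\mathscr{B}(\beta_i,\epsilon)\ \forall\, m\in\{1,\dots,M\}\right)=\left(\P\left(p_1\notin\mathscr{B}(\beta_i,\epsilon)\right)\right)^{M}.
\end{equation*}
It therefore suffices to bound the single-document probability $\P(p_1\notin\mathscr{B}(\beta_i,\epsilon))$ by the displayed Beta ratio, after which raising to the $M$-th power yields the claim.

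Second, the key geometric step shows that a large $i$-th proportion forces $p_1$ into the ball. Using $\sum_k\theta_{1k}=1$ to write $\beta_i=\sum_k\beta_i\theta_{1k}$, I would expand $p_1-\beta_i=\sum_{k\neq i}(\beta_k-\beta_i)\theta_{1k}$, so that by the triangle inequality together with $\|\beta_k-\beta_i\|_2=a_{i,k}\leq\max_{k\neq i}a_{i,k}$,
\begin{equation*}
\|p_1-\beta_i\|_2\leq\sum_{k\neq i}a_{i,k}\theta_{1k}\leq\Big(\max_{k\neq i}a_{i,k}\Big)(1-\theta_{1i}).
\end{equation*}
Consequently $\theta_{1i}>1-\epsilon/\max_{k\neq i}a_{i,k}$ implies $\|p_1-\beta_i\|_2<\epsilon$; taking contrapositives, the event $\{p_1\notin\mathscr{B}(\beta_i,\epsilon)\}$ is contained in $\{\theta_{1i}\leq 1-\epsilon/\max_{k\neq i}a_{i,k}\}$, whence $\P(p_1\notin\mathscr{B}(\beta_i,\epsilon))\leq\P(\theta_{1i}\leq 1-\epsilon/\max_{k\neq i}a_{i,k})$. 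The hypothesis $\epsilon<\max_{k\neq i}a_{i,k}$ is exactly what keeps the truncation point inside $(0,1)$.

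Finally I would invoke the standard fact that the $i$-th marginal of a $\Dir_K(\alpha)$ vector is $\mathrm{Beta}(\alpha_i,\sum_{j\neq i}\alpha_j)$, whose density is proportional to $\theta_i^{\alpha_i-1}(1-\theta_i)^{\sum_{j\neq i}\alpha_j-1}$. Writing the resulting Beta tail probability as the ratio of the truncated integral to the full normalizing integral reproduces exactly the bound in the statement, completing the argument. The only step requiring genuine care is the geometric containment of the second paragraph; the factorization and the identification of the Beta marginal are routine.
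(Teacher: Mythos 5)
Your proof is correct, and the paper in fact states this lemma without providing any proof, so your argument supplies exactly the reasoning the bound's form implies: factor over the i.i.d.\ documents, contain $\{p_1\notin\mathscr{B}(\beta_i,\epsilon)\}$ in $\{\theta_{1i}\le 1-\epsilon/\max_{k\neq i}a_{i,k}\}$ via the expansion $p_1-\beta_i=\sum_{k\neq i}(\beta_k-\beta_i)\theta_{1k}$ and the triangle inequality, and evaluate the resulting tail using the $\mathrm{Beta}(\alpha_i,\sum_{j\neq i}\alpha_j)$ marginal of the Dirichlet. All three steps check out, including the role of the hypothesis $0<\epsilon<\max_{k\neq i}a_{i,k}$ in keeping the truncation point in $(0,1)$; the $\sum_{j\neq 1}$ in the statement's denominator is evidently a typo for $\sum_{j\neq i}$, consistent with your reading.
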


Since $\left( \frac{\int_{0}^{1- (\epsilon/\underset{k\neq i}{\operatorname{max}} a_{i,k})}\theta_i^{\alpha_i-1}(1- \theta_i)^{\sum_{j \neq i}\alpha_j -1}\mathrm{d}\theta_i}{\int_{0}^1\theta_i^{\alpha_i-1}(1- \theta_i)^{\sum_{j\neq 1}\alpha_j -1}\mathrm{d}\theta_i} \right) < 1$, for all $i$ because Beta distribution is absolutely continuous in $(0,1)$, the bound on the right hand side  goes to $0$ as $M \rightarrow \infty$.

Let $\{\hat{\beta_1},\dots, \hat{\beta_{K}}\}$ be the topics identified by Conic Scan-and-Cover algorithm, 
with labels permuted according to the minimum matching distance criteria, with $\{\beta_1,\dots,\beta_K\}$ 
being the true topics. Then the following result shows the consistency of the identified topics.

\begin{thmn}[\ref{consistency}]
Suppose $\{\beta_1,\dots,\beta_K\}$ are the true topics, incenter $C_p$ is given, $\theta_m\thicksim\Dir_K(\alpha)$ and $p_m := \sum_k \beta_k \theta_{mk}$ for $m=1,\ldots,M$ and $\alpha \in \mathbb{R}_{+}^K$. Let $\{\hat \beta_1,\dots,\hat \beta_{\hat{K}}\}$ be the output of the Conic Scan-and-Cover algorithm trained with $\omega$ and $\mathcal{R}$ as in Proposition \ref{r_delta}. Then $\forall \ \epsilon >0$,
\begin{eqnarray*}
\P \left ( \left \{\underset{j \in \{1, \dots, \hat{K}\}}\min\|\beta_i - \hat \beta_j \| > \epsilon \ \text{, for any $i$}\in \ \{1,\dots,\hat{K}\} \right \} \cup\{ K \neq \hat{K}\} \right) \rightarrow 0\text{ as }M\rightarrow \infty.
\end{eqnarray*}
\end{thmn}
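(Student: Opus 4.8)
The plan is to decompose the failure probability into the event $\{\hat K\neq K\}$ and the event that some true topic is poorly matched, and to bound both on a single high-probability event supplied by Lemma \ref{single document}. Because the incenter $C_p$ is assumed given, there is no error in the centering step, and I may argue directly with the centered documents $\tilde p_m=\sum_k\theta_{mk}b_k\in\Bt$ and the true vertex directions; the topic recorded at step $k$ is exactly the farthest active centered point translated back by $C_p$, so it suffices to show that this farthest point lands within $\epsilon$ of a distinct true vertex $b_k$ at each of exactly $K$ steps.

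First I would fix a small $\epsilon'\in(0,\epsilon]$ and introduce the event $E_M$ that every vertex $b_i$ has at least one centered document in $\mathscr{B}(b_i,\epsilon')$. Lemma \ref{single document} gives that the probability of no document near a fixed vertex decays geometrically in $M$, so a union bound over the $K$ vertices yields $\P(E_M^c)\to 0$. It then remains to prove that, on $E_M$ and for $\epsilon'$ small enough and $\omega,\mathcal{R}$ in the ranges of Proposition \ref{r_delta} (with $\mathcal{R}<R_{min}$, so that every vertex is detectable), Algorithm \ref{angle_simple} deterministically outputs $\hat K=K$ estimates, each within $\epsilon$ of a true topic.

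The core of the argument is an induction on the iterations. Since $\|\tilde p_m\|_2=\|\sum_k\theta_{mk}b_k\|_2$ is maximized over $\Bt$ only at vertices, strict convexity of $\|\cdot\|_2^2$ forces any point of near-maximal norm to lie near a single vertex; combined with $E_M$, which guarantees a document of norm at least $R_{max}-\epsilon'$, the farthest active point $v_1$ is within a controlled $\ell_2$-distance of some vertex. A continuity upgrade of Proposition \ref{delta_exist} then shows that the cone $\mathcal{S}_\omega(v_1)$ centered on this perturbed direction still isolates exactly that vertex for $\epsilon'$ small, using the slack from $\omega$ lying strictly inside its admissible interval; hence its removal deletes the documents near that vertex and leaves those near the other $K-1$ vertices intact. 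Iterating, each step pins the farthest remaining point to a previously unrecovered vertex and records an estimate within $\epsilon'\le\epsilon$ of it. For the count, Proposition \ref{r_delta} gives complete coverage by the $K$ cones together with $\mathscr{B}(C_p,\mathcal{R})$: once all $K$ vertex-cones are removed, every leftover document has norm at most $\mathcal{R}$, so the guard $\|\tilde p_m\|_2>\mathcal{R}$ fails and the loop halts; conversely, while any vertex remains unrecovered, its nearby document of norm close to $R_k>\mathcal{R}$ keeps the guard satisfied and prevents early stopping. This forces exactly $K$ iterations, so $\hat K=K$, completing the deterministic implication on $E_M$.

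The step I expect to be the main obstacle is the perturbation argument linking the data-driven directions $v_k$ to the true vertex directions: Propositions \ref{delta_exist} and \ref{r_delta} are stated for cones aligned with the exact directions, whereas the algorithm centers each cone on an observed document. Making rigorous, uniformly across all $K$ iterations, that a near-vertex direction still captures exactly one vertex while preserving complete coverage requires a quantitative choice of $\epsilon'$ compatible with the angular gap between $\omega$ and $\omega_2$, the constants $a_{min},a_{max},R_{min},R_{max},r$, and the target accuracy $\epsilon$. Once this perturbation budget is fixed, the remainder reduces to the union bound from Lemma \ref{single document} and the deterministic coverage of Proposition \ref{r_delta}.
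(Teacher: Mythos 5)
Your proposal follows essentially the same route as the paper's proof: both reduce the claim to showing that, with probability tending to one, some document lands within $\epsilon$ of every true vertex, and both obtain this from Lemma \ref{single document} via a union bound over the $K$ vertices. The paper simply asserts the deterministic implication (``from the description of the algorithm it suffices to prove\ldots'') that you spell out as the induction-plus-perturbation argument, so your version is, if anything, more explicit about the step the paper leaves implicit.
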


\begin{proof}
From the description of the Conic Scan-and-Cover algorithm it suffices to prove 
that for the suitable choice of $\omega, \mathcal{R}$ as in Proposition \ref{r_delta} 
there holds $\P( \exists \ x_i \  \in \ \{p_1,\dots,p_m\} \ \text{such that} \ \|\beta_i - x_i\| < \epsilon \ \forall \ i \in \ \{1,\dots,K\}) \rightarrow 1$ as $M\rightarrow \infty$.
But this probability expression is bounded from below by
$1- \sum_{i=1}^K \P( p_m \notin  \mathscr{B}(\beta_i,\mathcal{\epsilon}) \; \forall \; m \in \{1,\dots,M\})$. 
The conclusion now follows from Lemma {\ref{single document}}.
\end{proof}

\subsection{Variance argument for multinomial setup}
In the topic modeling problem we are not given $p_m$ for $m=1,\ldots, M$. 
Under the bag-of-words assumption we have access to the frequencies of words in documents $w_1,\ldots,w_M$ 
which provide a point estimate $\bar w_m := w_m/N_m$ for the $p_m$.
The following proposition establishes a bound on the variation of $\bar w_m$ from $p_m$.

\begin{prop}
\label{supp:var}
\begin{eqnarray}
\mathbb{E}[\|\bar w_m  - p_m\|_2^2] \leq \frac{1-(1/V)}{N_m}.
\end{eqnarray}
\end{prop}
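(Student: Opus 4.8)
The plan is to exploit the fact that $w_m \thicksim \Mult(p_m, N_m)$ so that $\bar w_m = w_m/N_m$ is a coordinate-wise unbiased estimator of $p_m$, which turns the expected squared error into a sum of variances. Writing $w_m = (w_{m1},\dots,w_{mV})$ and $p_m = (p_{m1},\dots,p_{mV})$, each marginal count $w_{mv}$ is $\mathrm{Binomial}(N_m, p_{mv})$, so $\E[\bar w_{mv}] = p_{mv}$. First I would expand the squared $\ell_2$-norm into its $V$ coordinates and use unbiasedness to identify each term with a variance:
\begin{equation*}
\E\bigl[\|\bar w_m - p_m\|_2^2\bigr] = \sum_{v=1}^V \E\bigl[(\bar w_{mv} - p_{mv})^2\bigr] = \sum_{v=1}^V \mathrm{Var}(\bar w_{mv}).
\end{equation*}

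Next I would compute each variance using the binomial marginal: $\mathrm{Var}(\bar w_{mv}) = N_m^{-2}\,\mathrm{Var}(w_{mv}) = N_m^{-2}\, N_m\, p_{mv}(1-p_{mv})$. Summing over $v$ and using $\sum_v p_{mv} = 1$ gives the exact identity
\begin{equation*}
\E\bigl[\|\bar w_m - p_m\|_2^2\bigr] = \frac{1}{N_m}\sum_{v=1}^V p_{mv}(1-p_{mv}) = \frac{1}{N_m}\Bigl(1 - \sum_{v=1}^V p_{mv}^2\Bigr).
\end{equation*}
Note that only the diagonal variance terms matter here; the off-diagonal multinomial covariances never enter because the $\ell_2$-norm contains no cross terms.

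The final step is to lower-bound $\sum_v p_{mv}^2$ to produce the claimed upper bound. By Cauchy--Schwarz (equivalently the power-mean inequality), since $p_m$ is a probability vector on $V$ coordinates, $\sum_v p_{mv}^2 \geq (\sum_v p_{mv})^2/V = 1/V$, with equality exactly at the uniform distribution. Substituting yields $\E[\|\bar w_m - p_m\|_2^2] \leq N_m^{-1}(1 - 1/V)$, as required. There is no genuine obstacle in this argument: the only nonroutine observation is recognizing that the worst case of the variance bound is attained at the uniform word distribution, which is precisely what the Cauchy--Schwarz step captures; everything else is a standard multinomial variance computation.
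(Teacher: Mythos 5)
Your proof is correct and follows essentially the same route as the paper's: decompose the squared $\ell_2$-error into coordinatewise binomial variances $p_{mv}(1-p_{mv})/N_m$, sum to get $(1-\sum_v p_{mv}^2)/N_m$, and bound $\sum_v p_{mv}^2 \geq 1/V$ via Cauchy--Schwarz. The only cosmetic difference is that the paper wraps the computation in an iterated-expectation over the random $p_m$, whereas you work conditionally on $p_m$; since the bound is pointwise in $p_m$, this changes nothing.
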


\begin{proof}
By iterated expectation identity,
\begin{equation}
\begin{split}
\mathbb{E}[\|\bar w_m  - p_m\|_2^2] & = 
\mathbb{E} \biggr [\mathbb{E} \biggr [\sum_{i=1}^V \|\bar w_{mi}  - p_{mi}\|_2^2 \biggr | p_m \biggr ] \biggr ] \\
& = \mathbb{E} \biggr [\sum_{i=1}^V \frac{p_{mi}(1-p_{mi})}{N_m} \biggr ] \\
& = \frac{1- \mathbb{E}[\sum_{i=1}^V p_{mi}^2 ]}{N_m}
 \leq \frac{1-(1/V)}{N_m}. \nonumber
\end{split}
\end{equation}
The second equality follows because conditioned on $p_m$, each  $ w_{mi} \sim \text{Bin}(N_m,p_{mi})$. The last inequality follows from Cauchy-Schwartz Inequality.
\end{proof}

\section{Spherical k-means for topic modeling}
\label{supp:sph}
We aim to clarify the role of Step 11 of the document Conic Scan-and-Cover algorithm, a geometric
correction technique based on weighted spherical k-means optimization. 
\subsection{Topic directions as solutions to weighted spherical k-means}
\label{cvt}
Let centered document norms $r_m := \|\pt_m\|_2$ for $m=1,\ldots,M$ and $\alpha_k(v):= \cos(b_k, v)$,
cosine of the angle between direction $v$ and $k$-th topic.
The weighted spherical k-means objective takes the form
\begin{equation}
\label{sph_means_obj}
\min\limits_{\|v_k\|_2=1, k=1,\ldots K}\sum_{k=1}^K\sum_{m\in S^k(v_k)} r_m(1-\cos(v_k, \pt_m)),
\end{equation}
where $S^k(v_k) := \{m | \cos(v_k, \pt_m)) > \cos(v_l,\pt_m)\ \forall l\neq k)\}$. Next observe that:
\begin{equation}
r_m\cos(v_k, \pt_m) = \langle v_k,\pt_m \rangle = \sum_{i=1}^K \theta_{mi}\langle v_k, b_i \rangle = \sum_{i=1}^K \theta_{mi} R_i\alpha_i(v_k),
\end{equation}
so our objective \ref{sph_means_obj} becomes:
\begin{equation}
\label{obj}
\max\limits_{\|v_k\|_2=1, k=1,\ldots K}\sum_{k=1}^K\sum_{m\in S^k(v_k)} \sum_{i=1}^K \theta_{mi} R_i\alpha_i(v_k).
\end{equation}
Now, if $R_1=\ldots=R_K$ and $\alpha_i(b_k) = \alpha_i(b_l)\ \forall \ k,l\neq i$, which implies that topic simplex is equilateral, we see that cluster boundaries of topic directions are given by $m \in S^k(b_k)$ iff $\theta_{mk} > \theta_{ml} \ \forall \ l\neq k$. Observe that the corresponding partition is defined by the geometric \emph{medians} of topic simplex, which in turn partitions it into equal volume parts. Then, assuming that the topic simplex $B$ is symmetric,
combined with the symmetricity of the Dirichlet distribution of $\theta_m$-s, it follows that $b_k$ is the centroid of $S^k(b_k)$ for $k=1,\ldots,K$.

\subsection{Role of the spherical k-means in CoSAC algorithm for documents}
The result of Section \ref{cvt} shows that weighted spherical k-means with Lloyd type updates \citep{lloyd1982least} will converge to the directions of the true topics if it is initialized in their respective neighborhoods and equilaterality of $B$ and symmetricity of Dirichlet for document topic proportions is satisfied. 

Recall that goal of the Conic Scan-and-Cover is to find the number of topics and their directions, while Mean Shifting was used to address the noise in the data. 
We proceed to compare weighted spherical k-means by itself (with 500 iterations, which makes it slower than CoSAC) versus document Conic Scan-and-Cover with only Mean Shifting and the full document Conic Scan-and-Cover algorithm to see the effect of the spherical k-means post-processing step. Results in Fig. \ref{fig:supp_sph} are for the same scenarios as in Section \ref{experiment} -- that is when either documents are short $N_m\in[25,300]$ but corpora is large $M=30000$ or when documents are longer $N_m=500$ and corpora is smaller $M \in [100, 10000]$. We see that spherical k-means by itself does not succeed, whereas when used as a postprocessing step for CoSAC it allows for a slight improvement when documents are short. This is because it operates on the full data partition when taking averages for direction estimation, while Mean Shifting only has access to the data in its respective cone $\mathcal{S}_\omega(v)$. Using more data is important for noise reduction when documents are short as suggested by our analysis.

\begin{figure*}[ht]
\vskip -0.1in
\begin{center}
\centerline{\includegraphics[width=\textwidth]{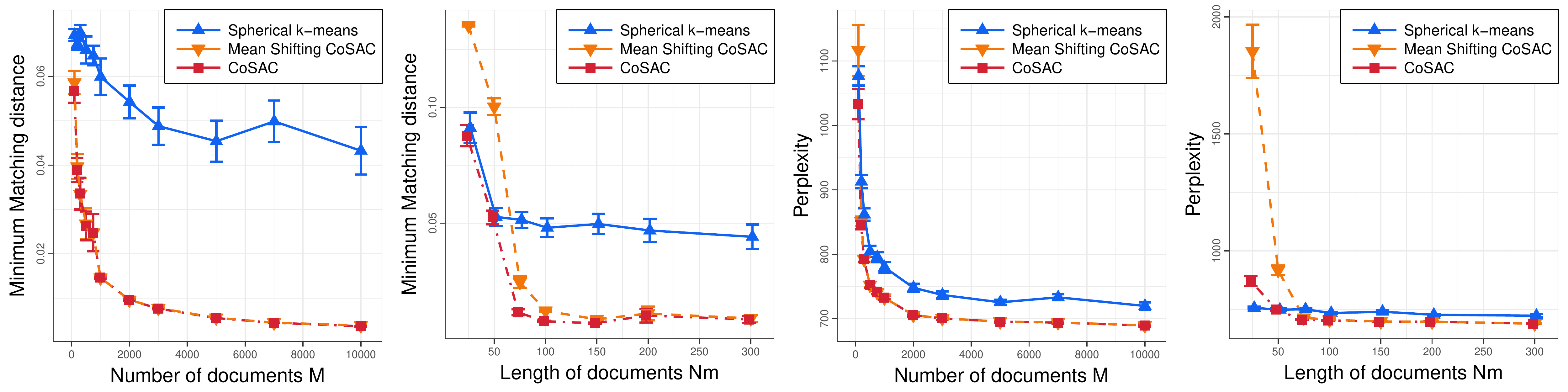}}
\caption{Minimum matching Euclidean distance for (a) varying corpora size, (b) varying length of documents. Perplexity for (c) varying corpora sizes, (d) varying length of documents.}
\label{fig:supp_sph}
\end{center}
\vskip -0.2in
\end{figure*}

\section{Additional experiments}
\label{supp:exp}
\subsection{Perplexity comparison}
In this section we present perplexity scores comparison for the experiments of Section \ref{experiment}. For simulation experiments we used $V=2000$, symmetric $\alpha=\eta=0.1$. To compute held-out perplexity for the CoSAC we employed projection based estimates for topic proportions $\theta_m$ from \cite{yurochkin2016geometric}, which led to a slightly worse perplexity scores for CoSAC and GDM in comparison to Gibbs sampler. However, CoSAC (except for $N_m=25$, when it slightly underestimates $K$) shows competitive performance without requiring $K$ as an input. 
We note that as before cscRecoverKL outperforms RecoverKL in all cases.

\subsection{Varying vocabulary size $V$}
Our next experiment investigates the influence of vocabulary size $V$. We set $N_m = 500$, $M = 5000$, $K=15$, symmetric $\alpha=\eta=0.1$ and varied $V$ from 2000 to 15000. We discovered that $\omega = 0.6$ is too small for $V>10000$, meaning that CoSAC algorithm does not find enough documents in the corresponding cones and keeps discarding without recording topics (per Step 9 of Algorithm \ref{angle_full}). This can be explained by the fact that vectors tending to be far apart in high dimensions and relatively (to $V>10000$) small values of corpora size $M$ and document lengths $N_m$. On the other hand, setting $\omega=0.75$ worked well for all values of $V$ in this experiment. Results are reported in Fig. \ref{fig:supp_all}(c), (d) and Fig. \ref{fig:supp_alpha}(d). Document CoSAC with $\omega=0.75$ recovered true $K=15$ for all values of $V$ and showed better recovery than GDM and Gibbs sampler in terms of minimum matching distance, while Gibbs sampler had slightly better perplexity for higher values of $V$. It is worth reminding that unlike CoSAC, both GDM and the Gibbs sampling based method requires the number of topics $K$ be given.

\subsection{Impact of $\alpha$}
Recall that, per the LDA model, topic proportions $\theta \thicksim \Dir_{K}(\alpha)$. Cases with $\alpha > 1$ were previously shown \citep{nguyen2015posterior} to exhibit slower convergence rates of the LDA's posterior estimation (via Gibbs sampler, for instance). Geometrically, large $\alpha$ implies that documents are more likely concentrated near the center of the topic simplex, leaving fewer documents near the vertices; this entails that geometric inference is more challenging. In our choices for parameters $\omega, \mathcal{R}, \lambda$ we relied on small values of $\alpha$ as a more practical scenario. 
Specifically, we considered $\omega=0.8$ for this experiment to achieve full coverage of the topic simplex. In our previous experiments we set $\alpha=0.1$. Now, we consider a larger range, $\alpha \in [0.01,1.5]$, to gauge its impact more fully.
Results are reported in Fig. \ref{fig:supp_alpha}(a), (b) and (c). For smaller values of $\alpha$ CoSAC is demonstrated to be the best algorithm of all under consideration. As $\alpha$ increases, CoSAC can still recover correct $K$ with high accuracy, although the quality of topic estimates deteriorates faster than for Gibbs sampler and GDM. We think that further work on estimation procedures for topic radii $R_k$s (recall that topics are estimated as direction and length along this direction $b_k = R_k v_k$) might address this issue. In this work we considered maximum projection (Step 13 of Algorithm \ref{angle_full}) to estimate $R_k$s, which might not be as accurate when documents are mostly near the center of the topic simplex (i.e., for higher $\alpha$).

\begin{figure*}[ht]
\vskip -0.1in
\begin{center}
\centerline{\includegraphics[width=\textwidth]{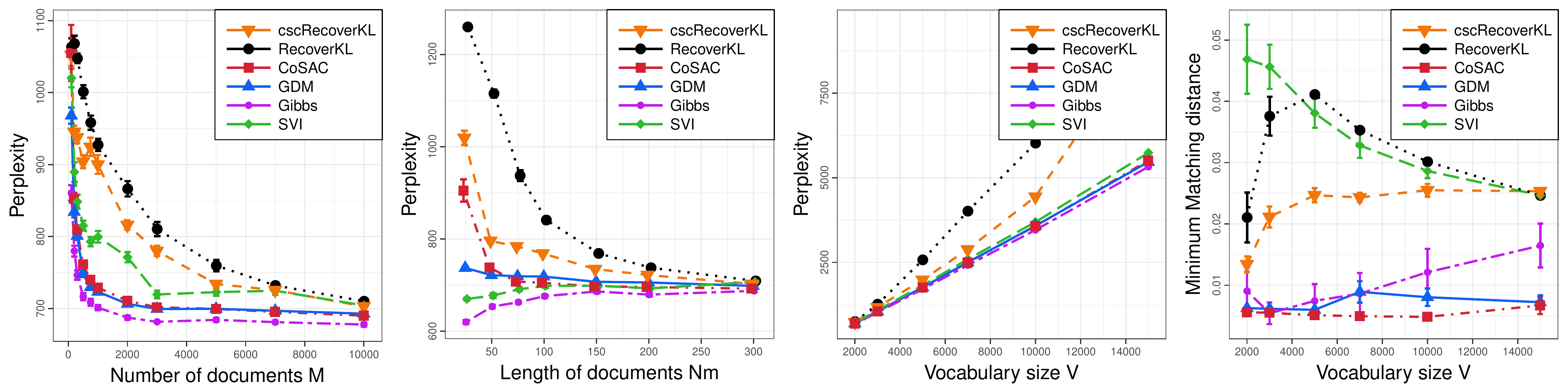}}
\caption{Perplexity for (a) varying corpora size, (b) varying length of documents, (c) varying vocabulary size; (d) Minimum matching Euclidean distance for varying vocabulary size.}
\label{fig:supp_all}
\end{center}
\vskip -0.2in
\end{figure*}

\begin{figure*}[ht]
\vskip -0.1in
\begin{center}
\centerline{\includegraphics[width=\textwidth]{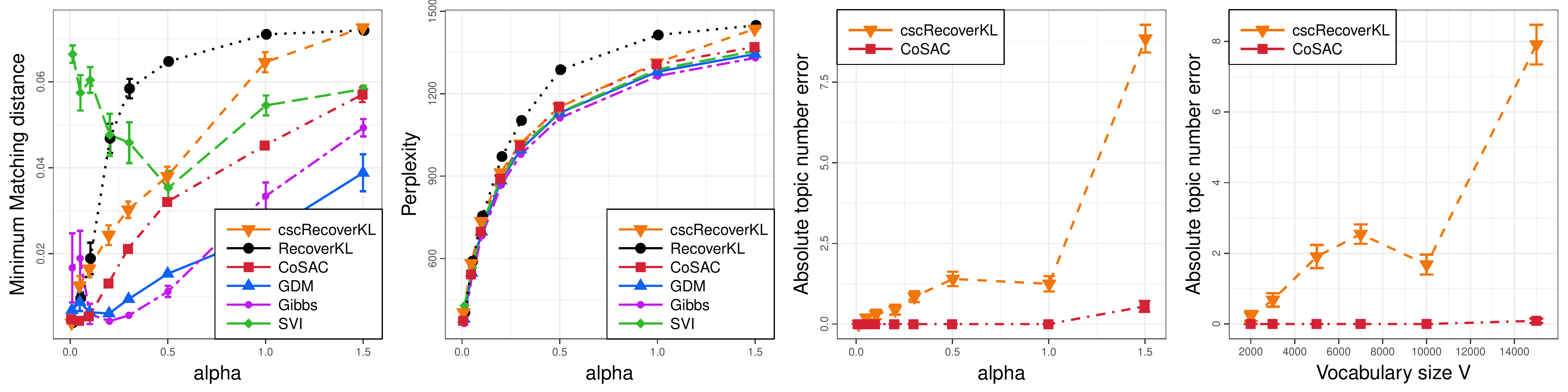}}
\caption{Varying $\alpha$ (a) Minimum matching Euclidean distance, (b) Perplexity, (c) Estimation of number of topics; (d) Estimation of number of topics for varying vocabulary size.}
\label{fig:supp_alpha}
\end{center}
\vskip -0.2in
\end{figure*}

\section{Implementation details}
\label{supp:details}
In this section we give details about the implementations of the algorithms used in simulation studies and real data. We implemented Conic Scan-and-Cover (CoSAC) algorithm in Python with the help of Scipy \citep{scipy} sparse matrix modules. Geometric Dirichlet Means (GDM) \citep{yurochkin2016geometric} was implemented with the help of Scikit-learn \citep{pedregosa2011scikit} k-means implementation (with 10 restarts to avoid local minima of k-means) combined with a geometric correction 
technique. Codes for CoSAC and GDM are available at \url{https://github.com/moonfolk/Geometric-Topic-Modeling}. For RecoverKL \citep{arora2012practical} we applied  \href{https://github.com/MyHumbleSelf/anchor-baggage/tree/master/anchor-word-recovery}{code from one of the coauthors}. To implement cscRecoverKL we used our CoSAC implementation (Algorithm \ref{angle_simple} with outlier threshold $\lambda$ as in Algorithm \ref{angle_full}) to find anchor words and then recovery routine from the aforementioned code. For the Gibbs sampling \citep{griffiths2004finding} we used an \href{https://pypi.python.org/pypi/lda}{lda package} in Python that utilizes Cython to achieve C speed. Gibbs sampler was trained with $\alpha=0.1$, $\eta=0.01$ and 500 iterations in simulations studies and $\alpha=0.1$, $\eta=0.1$ and 1000 iterations in the NYTimes articles\footnote{\url{https://archive.ics.uci.edu/ml/datasets/bag+of+words}} analysis. For the SVI \citep{hoffman2013stochastic} we used Gensim implementation \citep{rehurek_lrec} with automatic hyperparameters estimation, 50 iterations and 10 passes. Finally for HDP \citep{teh2006hierarchical} we used \href{https://github.com/blei-lab/hdp}{C++ implementation} with default hyperparameter settings and 100 iterations. For all experiments (except large vocabulary sizes and bigger $\alpha$), per discussions in Sections \ref{cone_s} and \ref{doc_scan}, parameters of the CoSAC were set to $\omega=0.6,n=0.001M$ and $\mathcal{R}$ as median of the centered and normalized document norms. Spherical k-means post-processing step was run for 30 iterations. For cscRecoverKL we set $\omega=0.4,\lambda=0.015$ ($\lambda=0.005$ for real data) and $\mathcal{R}$ as corresponding median of the norms. Note that cscRecoverKL takes word-to-word co-occurrence matrix as input, therefore sample size is $V$ and "documents" are rows of this matrix. Exploring distributional properties of the simplex spanned by the anchor words is outside of the scope of this work, therefore parameter choices were made empirically based on the visual analysis illustrated by Fig. \ref{fig:algo_iter}. All simulated results are reported after 20 repetitions of the data generation for each scenario and NYTimes results for LDA and HDP are reported over 10 refits of the corresponding Gibbs samplers.

\clearpage
\small
\bibliography{MY_ref}

\begin{thebibliography}{18}
\providecommand{\natexlab}[1]{#1}
\providecommand{\url}[1]{\texttt{#1}}
\expandafter\ifx\csname urlstyle\endcsname\relax
  \providecommand{\doi}[1]{doi: #1}\else
  \providecommand{\doi}{doi: \begingroup \urlstyle{rm}\Url}\fi

\bibitem[Anandkumar et~al.(2012)Anandkumar, Foster, Hsu, Kakade, and
  Liu]{anandkumar2012spectral}
Anandkumar, A., Foster, D.~P., Hsu, D., Kakade, S.~M., and Liu, Y.
\newblock A spectral algorithm for {L}atent {D}irichlet {A}llocation.
\newblock \emph{NIPS}, 2012.

\bibitem[Arora et~al.(2012)Arora, Ge, Halpern, Mimno, Moitra, Sontag, Wu, and
  Zhu]{arora2012practical}
Arora, S., Ge, R., Halpern, Y., Mimno, D., Moitra, A., Sontag, D., Wu, Y., and
  Zhu, M.
\newblock A practical algorithm for topic modeling with provable guarantees.
\newblock \emph{arXiv preprint arXiv:1212.4777}, 2012.

\bibitem[Blei et~al.(2003)Blei, Ng, and Jordan]{blei2003latent}
Blei, D.~M., Ng, A.~Y., and Jordan, M.~I.
\newblock Latent {D}irichlet {A}llocation.
\newblock \emph{J. Mach. Learn. Res.}, 3:\penalty0 993--1022, March 2003.

\bibitem[Deerwester et~al.(1990)Deerwester, Dumais, Furnas, Landauer, and
  Harshman]{deerwester1990indexing}
Deerwester, S., Dumais, S.~T., Furnas, G.~W., Landauer, T.~K., and Harshman, R.
\newblock Indexing by latent semantic analysis.
\newblock \emph{Journal of the American Society for Information Science},
  41\penalty0 (6):\penalty0 391, Sep 01 1990.

\bibitem[Griffiths \& Steyvers(2004)Griffiths and
  Steyvers]{griffiths2004finding}
Griffiths, Thomas~L and Steyvers, Mark.
\newblock Finding scientific topics.
\newblock \emph{PNAS}, 101\penalty0 (suppl. 1):\penalty0 5228--5235, 2004.

\bibitem[Hoffman et~al.(2013)Hoffman, Blei, Wang, and
  Paisley]{hoffman2013stochastic}
Hoffman, Ma.~D., Blei, D.~M., Wang, C., and Paisley, J.
\newblock Stochastic variational inference.
\newblock \emph{J. Mach. Learn. Res.}, 14\penalty0 (1):\penalty0 1303--1347,
  May 2013.

\bibitem[Hsu \& Poupart(2016)Hsu and Poupart]{hsu2016online}
Hsu, Wei-Shou and Poupart, Pascal.
\newblock Online bayesian moment matching for topic modeling with unknown
  number of topics.
\newblock In \emph{Advances In Neural Information Processing Systems}, pp.\
  4529--4537, 2016.

\bibitem[Jones et~al.(2001--)Jones, Oliphant, Peterson, et~al.]{scipy}
Jones, Eric, Oliphant, Travis, Peterson, Pearu, et~al.
\newblock {SciPy}: Open source scientific tools for {Python}, 2001--.
\newblock URL \url{http://www.scipy.org/}.

\bibitem[Lloyd(1982)]{lloyd1982least}
Lloyd, S.
\newblock Least squares quantization in {PCM}.
\newblock \emph{Information Theory, IEEE Transactions on}, 28\penalty0
  (2):\penalty0 129--137, Mar 1982.

\bibitem[Nguyen(2015)]{nguyen2015posterior}
Nguyen, XuanLong.
\newblock Posterior contraction of the population polytope in finite admixture
  models.
\newblock \emph{Bernoulli}, 21\penalty0 (1):\penalty0 618--646, 02 2015.

\bibitem[Olver et~al.(2010)Olver, Lozier, Boisvert, and Clark]{incompletebeta}
Olver, Frank W.~J., Lozier, Daniel~M., Boisvert, Ronald~F., and Clark,
  Charles~W.
\newblock {NIST} handbook of mathematical functions, cambridge university
  press, 2010.
\newblock URL \url{http://dlmf.nist.gov/8.17}.

\bibitem[Pedregosa et~al.(2011)Pedregosa, Varoquaux, Gramfort, Michel, Thirion,
  Grisel, Blondel, Prettenhofer, Weiss, Dubourg, et~al.]{pedregosa2011scikit}
Pedregosa, Fabian, Varoquaux, Ga{\"e}l, Gramfort, Alexandre, Michel, Vincent,
  Thirion, Bertrand, Grisel, Olivier, Blondel, Mathieu, Prettenhofer, Peter,
  Weiss, Ron, Dubourg, Vincent, et~al.
\newblock Scikit-learn: Machine learning in python.
\newblock \emph{Journal of Machine Learning Research}, 12\penalty0
  (Oct):\penalty0 2825--2830, 2011.

\bibitem[Pritchard et~al.(2000)Pritchard, Stephens, and
  Donnelly]{pritchard2000inference}
Pritchard, Jonathan~K, Stephens, Matthew, and Donnelly, Peter.
\newblock Inference of population structure using multilocus genotype data.
\newblock \emph{Genetics}, 155\penalty0 (2):\penalty0 945--959, 2000.

\bibitem[{\v R}eh{\r u}{\v r}ek \& Sojka(2010){\v R}eh{\r u}{\v r}ek and
  Sojka]{rehurek_lrec}
{\v R}eh{\r u}{\v r}ek, Radim and Sojka, Petr.
\newblock {Software Framework for Topic Modelling with Large Corpora}.
\newblock In \emph{{Proceedings of the LREC 2010 Workshop on New Challenges for
  NLP Frameworks}}, pp.\  45--50, Valletta, Malta, May 2010. ELRA.
\newblock \url{http://is.muni.cz/publication/884893/en}.

\bibitem[Tang et~al.(2014)Tang, Meng, Nguyen, Mei, and
  Zhang]{tang2014understanding}
Tang, Jian, Meng, Zhaoshi, Nguyen, Xuanlong, Mei, Qiaozhu, and Zhang, Ming.
\newblock Understanding the limiting factors of topic modeling via posterior
  contraction analysis.
\newblock In \emph{Proceedings of The 31st International Conference on Machine
  Learning}, pp.\  190--198. ACM, 2014.

\bibitem[Teh et~al.(2006)Teh, Jordan, Beal, and Blei]{teh2006hierarchical}
Teh, Y.~W., Jordan, M.~I., Beal, M.~J., and Blei, D.~M.
\newblock Hierarchical dirichlet processes.
\newblock \emph{Journal of the american statistical association}, 101\penalty0
  (476), 2006.

\bibitem[Xu et~al.(2003)Xu, Liu, and Gong]{xu2003document}
Xu, Wei, Liu, Xin, and Gong, Yihong.
\newblock Document clustering based on non-negative matrix factorization.
\newblock In \emph{Proceedings of the 26th Annual International ACM SIGIR
  Conference on Research and Development in Informaion Retrieval}, SIGIR '03,
  pp.\  267--273. ACM, 2003.

\bibitem[Yurochkin \& Nguyen(2016)Yurochkin and Nguyen]{yurochkin2016geometric}
Yurochkin, Mikhail and Nguyen, XuanLong.
\newblock Geometric dirichlet means algorithm for topic inference.
\newblock In \emph{Advances in Neural Information Processing Systems}, pp.\
  2505--2513, 2016.

\end{thebibliography}
\bibliographystyle{icml2017}

\end{document}